\newcommand{\Cm}{\mathbf{C}}  % Cost matrix
\newcommand{\Dm}{\mathbf{D}}  % Accumulated cost matrix
\newcommand{\Po}{\mathbf{P}^*}  % Optimal path
\newcommand{\Ps}{\mathbf{P}^s}  % Segmented path
\newcommand{\Pa}{\mathbf{P}'}  % Approximated path (on a segment)
\newcommand{\Paf}{\mathbf{P}^{\scriptscriptstyle\sim}} % Approximated path (the whole path)
\newcommand{\qa}{q'}  % Approximated path
\newcommand{\dtwo}{d^*}  % DTW distance full path optimal
\newcommand{\dtwa}{d^{\scriptscriptstyle\sim}}  % DTW distance full path approximated
\newcommand{\lo}{L}  % Length of optimal full path
\newcommand{\co}{c^*}  % Cost of optimal full path
\newcommand{\ca}{c^{\scriptscriptstyle\sim}}  % Cost of simplified full path
\newcommand{\cai}{c_i^{\scriptscriptstyle\sim}}  % Cost of simplified segment
\newcommand{\coi}{c_i^*}  % Cost of optimal segment
\newcommand{\cp}{c^{\prime}}  % Simplified cost of c
\newcommand{\loi}{l_i}
\DeclareMathOperator{\ldist}{cost}  % Local distance / cost
\newcommand{\ldistm}{\phi}
\newcommand{\ldisti}{\phi^{-1}}
\newcommand{\tss}{\mathbf{s}}
\newcommand{\tsf}{\mathbf{s_{1}}}  % First series
\newcommand{\tst}{\mathbf{s_{2}}}  % Second series
\newcommand{\ubm}{\delta_{\rm rel}}  % Multiplicative tolerance criterion
\newcommand{\uba}{\delta_{\rm abs}}  % Additive tolerance criterion
\newcommand{\gtm}{\Delta_{\rm rel}}  % global relative tolerance
\newcommand{\gta}{\Delta_{\rm abs}}  % global absolute tolerance
\begin{document}
\title{Warping and Matching Subsequences Between Time Series}
% \author{Author information scrubbed for double-blind reviewing}
\author{Simiao Lin\inst{1,2} \and Wannes Meert\inst{1,2}  \and Pieter Robberechts\inst{1,2}  \and Hendrik Blockeel\inst{1,2}}
\institute{Dept. of Computer Science, KU Leuven, Leuven, Belgium  \and Leuven.AI, Leuven, Belgium}
\date{June 2025}

% \author[1]{Simiao Lin}
% \author[1]{Wannes Meert}
% \author[1]{Pieter Robberechts}
% \author[1]{Hendrik Blockeel}
% \affil[1]{KU Leuven, Dept. of Computer Science; Leuven.AI, Leuven, Belgium}
% \date{June 2025}
\maketitle

% \todo{TODO: Come up with a better title}
% \begin{itemize}
%     \item Understanding the Warping Path in Dynamic Time Warping: A Visual and Quantitative Perspective
%     \item Decoding the Warping Path: A Quantitative and Visual Exploration of Dynamic Time Warping
%     \item Explaining the optimal warping between time series
%     \item Time series optimal warping learning (TS-OWL)
%     \item Decoding the Warp
%     \item Unraveling the warping path
%     \item Aligning time: Understanding the Warping Path for time series
%     \item {\tt ExplainDTW}: An Intuitive Visualization of Time Series Alignments
%     \item Visualizing Time Series Alignments in an Intuitive Way
% \end{itemize}

\begin{abstract}
Comparing time series is essential in various tasks such as clustering and classification. While elastic distance measures that allow warping provide a robust quantitative comparison, a qualitative comparison on top of them is missing. Traditional visualizations focus on point-to-point alignment and do not convey the broader structural relationships at the level of subsequences. This limitation makes it difficult to understand how and where one time series shifts, speeds up or slows down %\simiao{It is contradictory to our agreement today. We used 'speeds up' and 'slows down' for compression/expansion.}
with respect to another. To address this, we propose a novel technique that simplifies the warping path to highlight, quantify and visualize key transformations (shift,  compression, difference in amplitude). By offering a clearer representation of how subsequences match between time series, our method enhances interpretability in time series comparison.
\keywords{Time Series  \and Visualization \and Dynamic Time Warping.}

\end{abstract}

%%%%%%%%%%%%%%%%%%%%%%%%%%%%%%%%%%%%%%%%%%%%%%%%%%%%%%%%%%
\section{Introduction}
% \todo{Clarify what!}
% \todo{Add the example in which there is crossing in the warping path, show our strength}
% \todo{TODO: Different color for compression, stretch, etc}

% Backup: Comparing two time series is a core operation in a wide variety of tasks such as clustering, classification, or anomaly detection. Typically, such a comparison is performed quantitatively and a number is returned that expresses the \emph{distance} (measure) or \emph{similarity} between the two series. This distance is then used for the subsequent task. To compute this distance, numerous options are available such as Euclidean distance, Dynamic Time Warping (DTW) distance \cite{Sakoe1978TASS}, or Move-Split-Merge (MSM) distance \cite{Stefan2012TKDE}. The latter two are \emph{elastic distances}: they account for warping that might happen between two similar occurrences. For example, movements that are executed with slightly different speeds, or heartbeat that have subtle variations in time. Elastic distances are popular because they are simple, allow for a robust comparison between two time series, and often have competitive performance \cite{Bagnall2017DMKD}. 
Comparing two time series is a core operation in many tasks %a wide variety of tasks 
such as clustering, classification, segmentation, motif discovery or anomaly detection. Typically, such a comparison is performed quantitatively and a \emph{distance} is returned that expresses dissimilarity between the series. %This distance is then used for the subsequent task. 
Examples include %To compute this distance, numerous options are available such as 
Euclidean distance, Dynamic Time Warping (DTW) distance \cite{Sakoe1978TASS}, or Move-Split-Merge (MSM) distance \cite{Stefan2012TKDE}. The latter two are \emph{elastic distances}: they account for warping that might happen between two similar occurrences, such as movements that are executed with slightly different speeds, or heartbeats that vary according to heart rate. 

Elastic distances are popular because they are simple, allow for a robust comparison between two time series, and often have competitive performance in classification~\cite{Bagnall2017DMKD}. 
A challenge with elastic distances, however, is performing a qualitative comparison.
% \todo{WM: Before explaining the existing approaches, we should first explain what the task and challenge is. Existing approaches is in the next paragraph.} 
While it is clear how individual time points in one series correspond to those in another, understanding how subsequences of time points in one series map to subsequences in the other remains difficult.
Typical visualizations map a time point to its warped time point in the other series, optionally 
using colors to indicate where one time series speeds up or slows down with respect to the other time series (see Figure~\ref{fig:intro}) \cite{Budikova2022ICSSA,mcfee2015librosa,meert_2020_7158824,Urribarri2020CACIC}. 
Such visualizations are especially popular in domains such as human movement \cite{Urribarri2020CACIC} or sound processing \cite{mcfee2015librosa} where understanding, via visualization, how two series differ is important to understand the data. 

In such graphs, it is challenging to identify at a glance which specific subsequence in one time series corresponds to which subsequence in the other, where these subsequences start and end, and how they relate to each other. For example, a subsequence in one time series might be a compressed version of a subsequence in another. This can be observed in Figure~\ref{fig:intro} where both time series exhibit a sine wave toward the end. Compared to the top time series, the sine wave in the bottom time series is compressed. However, this information is not clearly conveyed.
Coloring the individual warping lines improves the visualization, but it is still hard to see where behavior is consistent over a subsequence.
% (see Figure~\ref{fig:intro_misalignment}).
% The possible overlapping issue in the visualization can also be problematic.  
% \todo{The example in our discussion seemed more clear. Start with example?}

% Backup:In this work, we present a technique that can show the overall patterns in the warping between two time series. 
In this work, we present a technique that identifies and quantifies the relationship between subsequences with similar shapes in two time series.\footnote{Python package available at \url{https://github.com/wannesm/dtaidistance} and \url{https://pypi.org/project/dtaidistance}}
We utilize DTW and its corresponding optimal warping path to measure the distance, and further simplify this optimal warping path. This enables a new type of visualization (see Figure~\ref{fig:intro_segmented}) in which one can easily observe how subsequences of one series need to be (1) shifted (forward or backward) %(5delayed, on-time, or ahead) 
and (2) compressed (for brevity, we use the term ``compression'' to mean both compression and expansion in this text), to best fit the other series. The absolute amount of compression of a subsequence is visualized using an orange block (see Figure~\ref{fig:intro_segmented}, ``compression''), and
amplitude differences after warping are visually represented by an orange-shaded region (``amplitude difference'' in Figure~\ref{fig:intro_segmented}).

\begin{figure}[t]
\begin{subfigure}[t]{.45\textwidth}
	(a)
    \raisebox{-0.8\height}{\includegraphics[width=\linewidth]{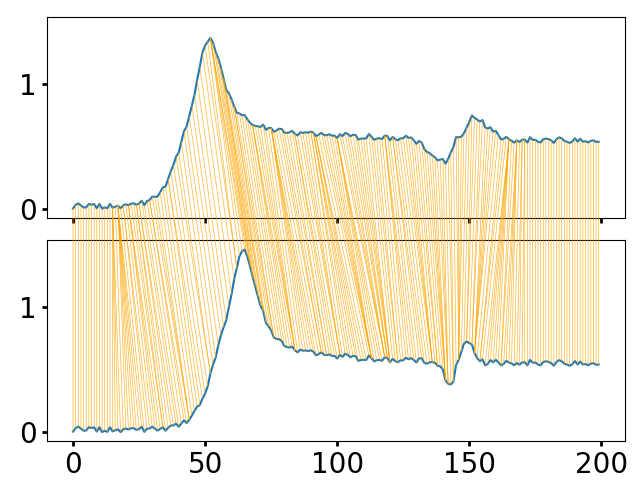}}
	%\caption{DTW visualization by connecting matching values at all time points \cite{meert_2020_7158824}.%\simiao{I haven't removed the time index on xticks yet since I think they are important for the definition of time-shift/compression. To discuss.}

\end{subfigure}%
\hfill%
\begin{subfigure}[t]{.50\textwidth}
    (b)
	\raisebox{-0.8\height}{\includegraphics[width=\linewidth]{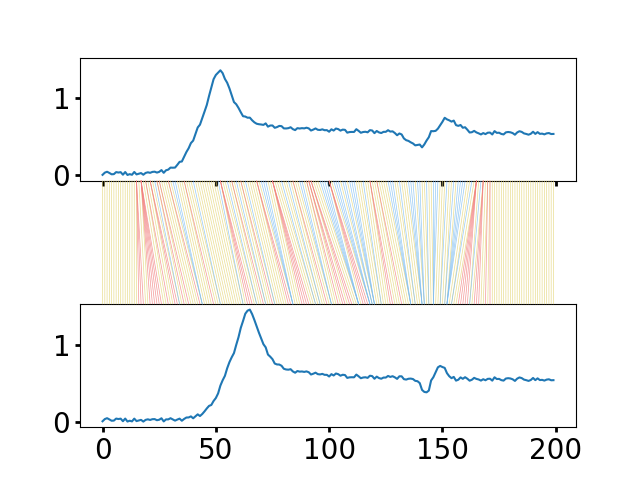}}
	%\caption{Connecting the indices and coloring the misalignment \cite{Urribarri2020CACIC}: red/blue means one series is sped up / slowed down w.r.t. the other.}\label{fig:intro_misalignment}
\end{subfigure}%
\caption{Existing visualizations for DTW. (a) Connecting matching values at all time points \cite{meert_2020_7158824}. (b) Connecting the indices and coloring the misalignment \cite{Urribarri2020CACIC}: red/blue means one series is sped up / slowed down w.r.t. the other.}
\label{fig:intro}
\end{figure}

\begin{figure}
\centering
\includegraphics[width=0.7\linewidth]{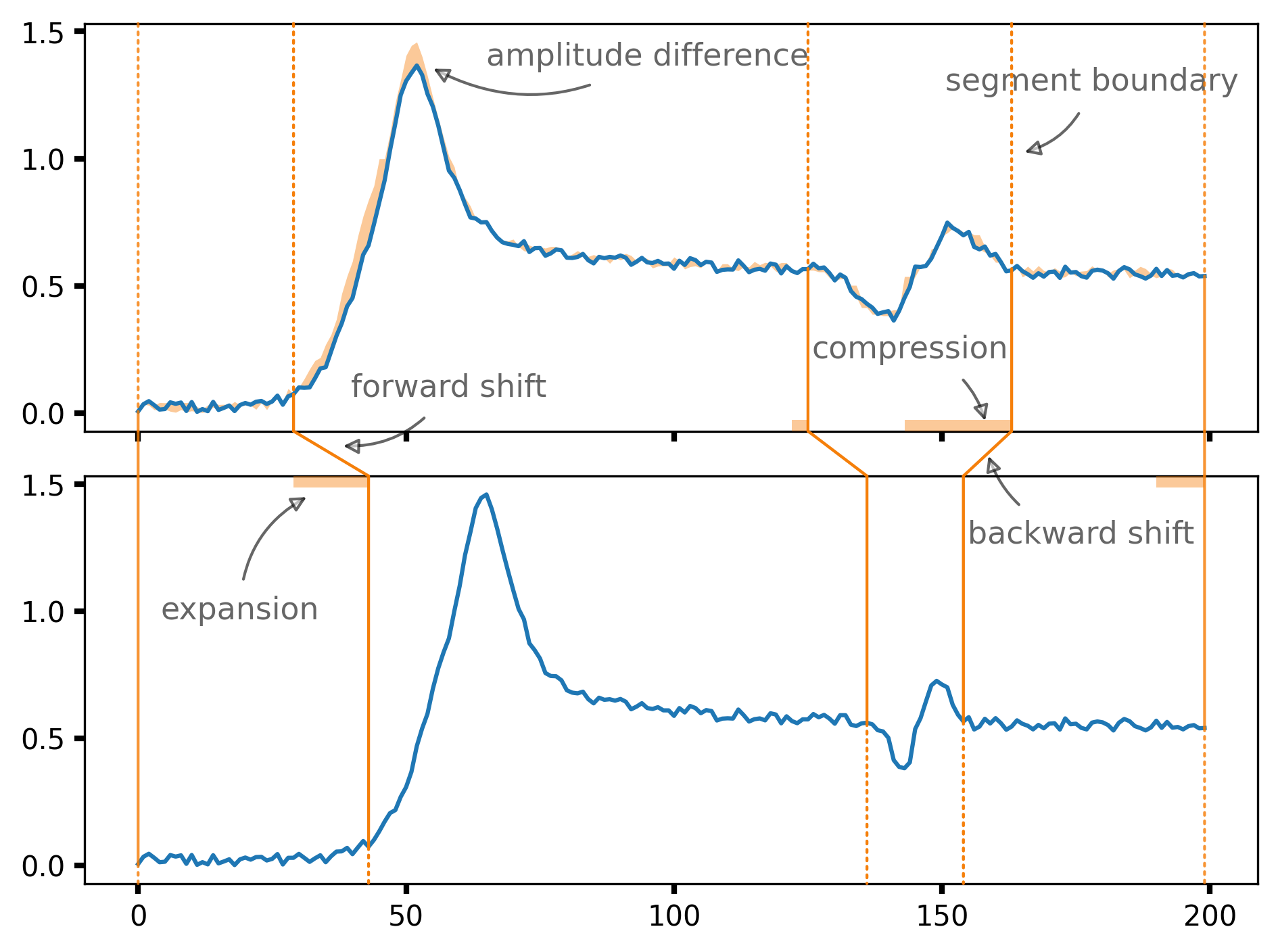}
\caption{Our new type of visualization shows the segments where the two time series are similar up to a constant time-shift and compression. 
% \simiao{Potential problem. This example provides a clear visualization, while the other example (see Figure~\ref{fig:alg_example}) illustrates the criterion check. Using the same pair of time series for both would improve consistency, but neither example fully serves the other’s purpose.} 
}
% \pieter{Why are these expansion and compression blocks only drawn for a part of the segment? Isn't the entire segment expanded / compressed?} \simiao{I think the block's length represents the absolute degree of expansion or compression, and the block is displayed on the longer segment for better visual clarity. This information is not yet explicitly specified in the current text. The variation along the amplitude is not specified. Shoule we specify them?} \wannes{Correct. You cannot put the blocks on the shorter one, they might overlap with the next segment. The blocks is the currently best solution (e.g. the triangles were more confusing, showing connecting lines makes it hard to see how much the elasticity is if the segments are also shifted, ...). But we can improve further.}

\label{fig:intro_segmented}
\end{figure}

%remove because we have a page limit
% \begin{figure}
% \begin{subfigure}[t]{0.48\textwidth}
%     \includegraphics[width=\linewidth]{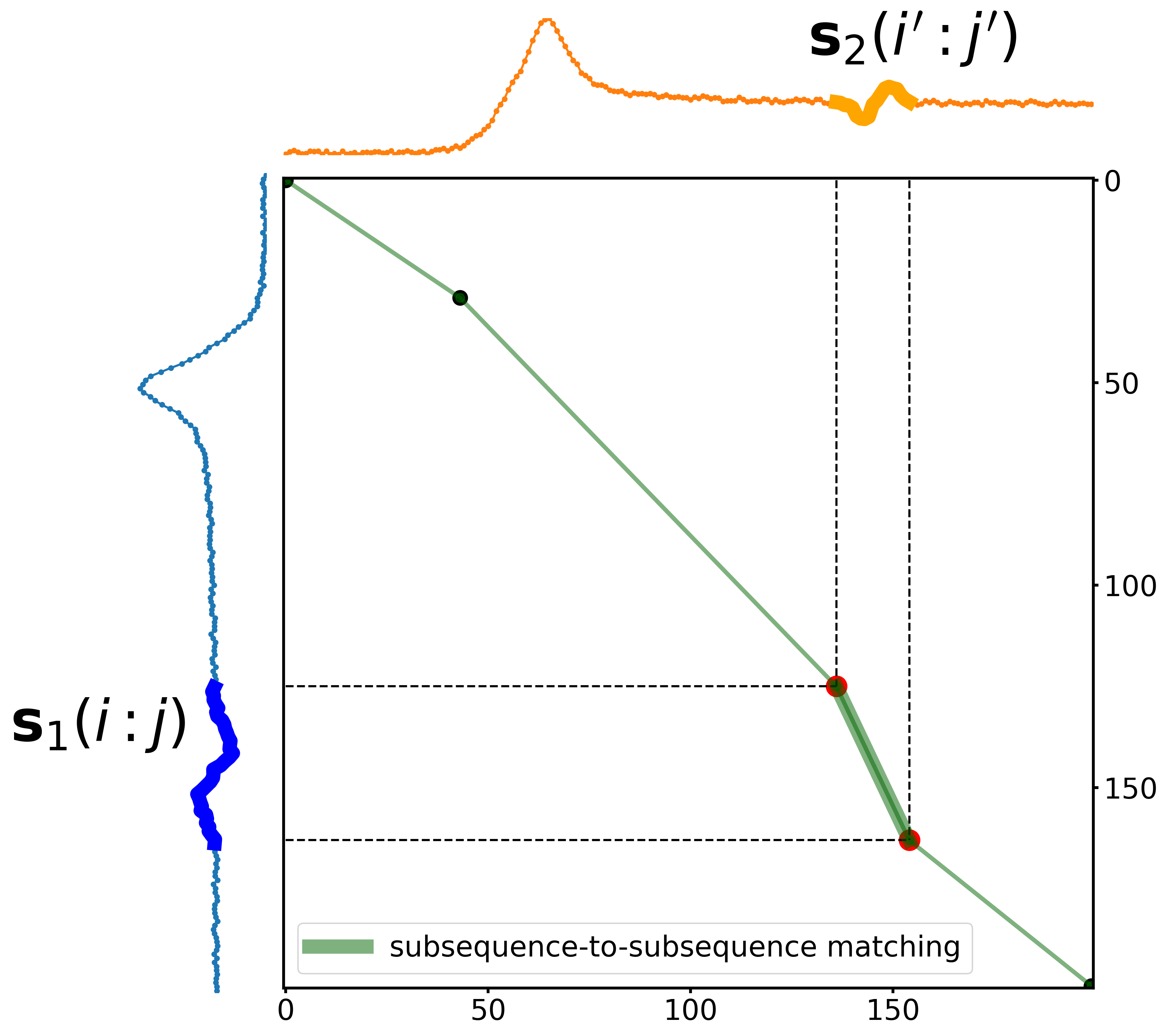}
%     \caption{The subsequence-to-subsequence matching that captures the compression of the orange sine wave relative to the blue sine wave. \simiao{Not sure whether it helps understanding or makes it more confusing. } \simiao{Due to the page limit, unless we find it very useful, better leave it out.}}
%     \label{fig:intro_simplified_path}
% \end{subfigure}
% \hfill
% \begin{subfigure}[t]{0.48\textwidth}
%     \includegraphics[width=\linewidth]{img/intro_subsequence_matching2.png}
%     \caption{The subsequence-to-subsequence matching that captures the compression of the orange sine wave relative to the blue sine wave. \simiao{The other option}}
%     \label{fig:intro_simplified_path_2}
% \end{subfigure}
% \end{figure}

%%%%%%%%%%%%%%%%%%%%%%%%%%%%%%%%%%%%%%%%%%%%%%%%%%%%%%%%%%
\section{Background}
%%----------------------------------------------
\subsection{Time Series}
A \emph{time series} is a sequence of values measured over time, where each value is associated with a specific point in time. We assume the time points are spaced at equidistant time intervals and indexed consecutively: $\tss = \{v_1, v_2, \ldots, v_n\}$, with $|\mathbf{s}|=n$ the length.
When referring to the $i^{\textrm{th}}$ item in the series we use the syntax $\tss(i)=v_i$. A range of items from the $i^{\textrm{th}}$ to the $j^{\textrm{th}}$ item is $\tss(i:j)$. A list of indices can be used to generate a new sequence from the original sequence, thus $\tss(\{1,3,3\})=\{v_1,v_3,v_3\}$.
When $v_i$ is a scalar, it is a univariate time series. If $v_i$ is a tuple with a dimension larger than one, it is a multivariate time series.

%%----------------------------------------------
\subsection{Warping Path}

A \emph{warping path} is a sequence of index pairs that represents a many-to-many matching between indices in one time series to indices of another time series. For example, a path $\mathbf{P} = \{(1,1), (1,2), (2,3), (3,3)\}$ between $\mathbf{s}_1$ and $\mathbf{s}_2$ states that value $\mathbf{s}_1(1)$ is matched to $\mathbf{s}_2(1)$ and $\mathbf{s}_2(2)$, $\mathbf{s}_1(2)$ is matched to $\mathbf{s}_2(3)$, etc. A warping path must satisfy three conditions \cite{Sakoe1978TASS}: (1) boundaries: a path starts with $(1,1)$ and ends with $(|\mathbf{s}_1|,|\mathbf{s}_2|)$; (2) continuity: every index occurs at least once for both series; (3) monotonicity: the indices from one series can only remain the same or increase.
The notation earlier introduced for sequences is also used for warping paths: $\mathbf{P}(i)$ is the $i^{\textrm{th}}$ item in sequence $\mathbf{P}$, etc.

%%----------------------------------------------
\subsection{Cost and Distance According to a Warping Path}

%\hendrik{Not sure if map and imap should be used, or simply p-norm to the power p.}

A \emph{cost function} $\ldist(v_i,v_j)$ expresses how similar two values $v_i$ and $v_j$ in a time series are.\footnote{The cost function is also referred to as the local distance (function) in literature.} Cost functions are usually of the form $\ldist(x,y) = \ldistm(||x-y||)$ 
with $\ldistm$ a monotonically increasing function, such as $\ldistm(z)=z^\lambda$; that is, they depend monotonically on a norm of the difference, like the $p$-norm. Common instances are the Manhattan distance ($p=1, \lambda=1$) and squared Euclidean distance ($p=2, \lambda=2$). For univariate time series, this boils down to absolute and squared difference. But any real value can be chosen for $\lambda$ \cite{Herrmann2023DMKD}.
%\wannes{Interesting way to introduce map and cost, I like it. But the names and order are off. The `squared Eucl dist', needs to be `Eucl dist'. $L_1$ and $L_2$ are identical for univariate series. In that case it is the map function that makes it `squared'. I think this is easier if we use a different variable for map/imap (e.g. raising to the power $q$). This does not need to be the same $p$ as the norm. Originally I had the multivariate as an additional point (a footnote). By making it the primary thing it is more confusing. Most papers don't even consider multivariate, like the ref we use \cite{Herrmann2023DMKD} which only discusses what we now call map/imap (with the abs diff between values). And it is this $q$ that is the tunable parameter}

The \emph{cost of a warping path} is the sum of costs of all the matched value pairs: $c = \sum_{(n,m) \in \mathbf{P}}\ldist(\tsf(n),\tst(m))$.
The \emph{distance according to the path} is $d = \ldisti(c)$ %with $\ldisti$ the inverse of $\ldistm$
(e.g., $d = c^{1/\lambda}$). Because the algorithms discussed below work in ``cost space'', while distance has a more natural interpretation in the original data space, we will occasionally use $\ldistm$ and $\ldisti$ to switch between these spaces.

\subsection{Dynamic Time Warping}
% https://www.audiolabs-erlangen.de/resources/MIR/FMP/C3/C3S2_DTWbasic.html

\emph{Dynamic Time Warping (DTW)} is a method to find a warping path that minimizes the distance according to that path. This is the warping path where the matched values are as close as possible to each other. Therefore, this \emph{DTW distance} expresses the dissimilarity (in shape) between two time series \cite{Sakoe1978TASS}. 

To find the \emph{optimal warping path} $\Po$ and the associated \emph{optimal distance} $\dtwo$, dynamic programming is used. First, the cost matrix is computed $\Cm(n,m) = \ldist(\tsf(n),\tst(m))$. Then, the accumulated cost matrix $\Dm$ is filled in:
\begin{eqnarray*}
        % back up
	% \Dm(0,m) &=& \Cm(0,m)\\
	% \Dm(n,0) &=& \Cm(n,0)  \\
	% \Dm(n,m) &=& \Cm(n,m) + \min\left(\Dm(n-1,m-1),\Dm(n-1,m),\Dm(n,m-1)\right)
        \Dm(1,1) &=& \Cm(1, 1) \\
        \Dm(1,m) &=& \Cm(1,m) + \Dm(1, m-1) \quad \text{if } m > 1\\ 
	\Dm(n,1) &=& \Cm(n,1) + \Dm(n-1, 1) \quad \text{if } n > 1\\
%	\Dm(n,m) &=& \Cm(n,m) + \min\left[\Dm(n-1,m-1),\Dm(n-x1,m),\Dm(n,m-1)\right]\\
	\Dm(n,m) &=& \Cm(n,m) + {\min}_{(f, t) \in \{(1,1), (1, 0), (0,1)\}}\Dm(n-f,m-t)\\
     &&\qquad \text{if } m>1 ~\text{and}~ n >1
\end{eqnarray*}%

Next, the optimal point-to-point alignment is found between $\tsf$ and $\tst$ using backtracking:
\begin{eqnarray*}
	q_{\lo} &=& (|\tsf|, |\tst|) \\
        %back up
%	q_{l-1} &=& \arg\min\left(\Dm(q_l-(1,1)),\Dm(q_l-(1, 0)),\Dm(q_l - (0,1))\right)\\
    q_{l-1} &=& q_l - {\arg\min}_{(f, t) \in \{(1,1), (1, 0), (0,1)\}}\ \Dm(q_l-(f,t))   \\
    q_1 &=& (1, 1) 
\end{eqnarray*}
% \todo{argmin seems incorrect. The arg is (f,t) while you assume it is $q_l-(f,t)$}
\noindent Finally, the optimal warping path $\Po = (q_1, q_2, \ldots, q_{\lo})$ is constructed, where $\lo$ is the length of $\Po$. 
The DTW distance is $\dtwo = \ldisti(\co) = \ldisti(\sum_{q_i\in \Po} \Cm(q_i)) $, where $\co$ is the total cost along $\Po$.

A warping path $\mathbf{P}$ can be visualized in the cost matrix by highlighting for each $(i,j) \in \mathbf{P}$ the cell on row $i$ and column $j$ (see Fig.~\ref{fig:intro_path}).
The method we propose in this paper can be thought of as simplifying a warping path by approximating it with linear segments.
%\hendrik{Add a simplified warping path next to it, to immediately show what our simplification looks like in this matrix?}

% \begin{figure}[t]
%     \centering
%     \includegraphics[width=0.6\linewidth]{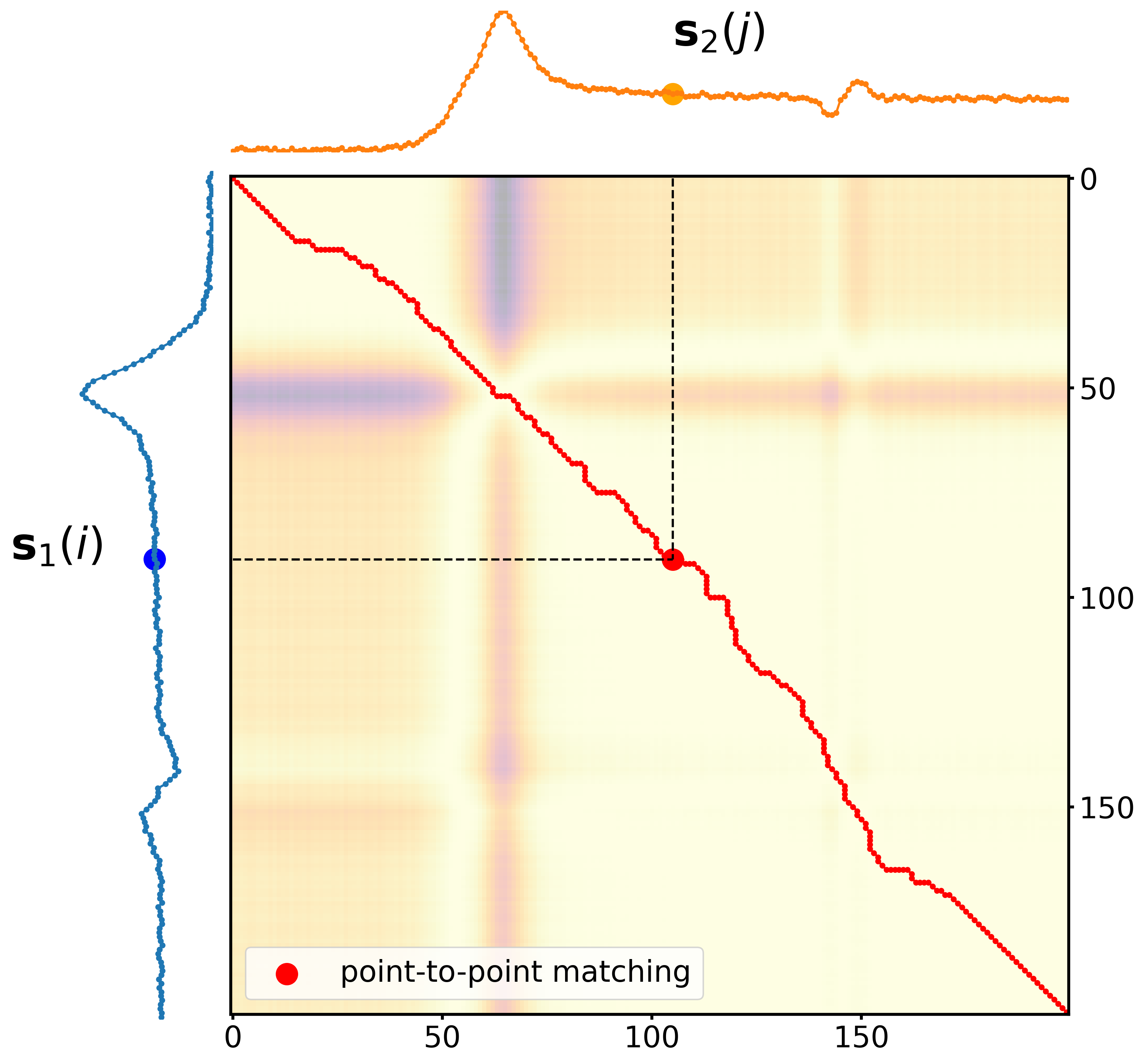}
%     % \framebox(100,100){}
% 	\caption{DTW visualization by showing the warping path. }
% 	\label{fig:intro_path}
% \end{figure}

\begin{figure}[t]


\begin{subfigure}[t]{.45\textwidth}
    \includegraphics[width=\linewidth]{img/intro_time_point_matching.png}
    %\caption{Visualizing the warping path. \pieter{The ``Dist'' on the top left is too small to be able to read it. It should be in larger font, or simply drop it.} }
\end{subfigure}
\hfill
\begin{subfigure}[t]{.45\textwidth}             
    \includegraphics[width=\linewidth]{img/intro_subsequence_matching.png}
    %\caption{Subsequence-to-subsequence matching. \hendrik{Use a better visible color, cyan is too light; perhaps drop the original path on the right figure, we see it on the left anyway.}}
    %\label{fig:intro_subsequence_matching}
\end{subfigure}
\caption{Left: A cost matrix constructed for DTW (lighter colors mean smaller values), and the optimal warping path computed from it (red line). Right: A simplified warping path as constructed by our method.}
\label{fig:intro_path}

\end{figure}

%\pieter{This figure did not have a caption and therefore no figure number. I'm also not a fan of subfigures that are introduced in different parts of the text. You look at the entire figure when it's referenced first but then have no clue what the other subfigure is about. If space is an issue, I would drop Figure b as it did not add a lot for me. Then I would also extend the caption of figure a bit. Maybe something like: ``DTW visualization showing the cost matrix and the resulting optimal warping path. The cost matrix represents pairwise distances between points in two time series, with lighter colors indicating lower costs. The optimal warping path (highlighted in red) is computed from this matrix, representing the best alignment between the sequences.'' (I think it's not a bad idea to show the cost matrix once).} \hendrik{Pieter makes good points, we can drop the simplified path or if not, we need to better explain it, e.g., the method we propose in this paper essentially computes a simplification of the path, such as shown to the right.}

%%%%%%%%%%%%%%%%%%%%%%%%%%%%%%%%%%%%%%%%%%%%%%%%%%%%%%%%%%
\section{Problem Statement}

The goal of this work is to explain comparisons between time series by identifying a higher-level matching between time series: one that maps subsequences to similar subsequences, rather than mapping individual time points.

A key element of the proposed method is the concept of a uniform subsequence mapping.  Intuitively, a warping path $\mathbf{P}$ maps subsequence $\mathbf{s}_1(i:j)$ {\em uniformly} to subsequence $\mathbf{s}_2(i':j')$ 
%\simiao{We just use i for $\mathbf{s}_1$ and j for $\mathbf{s}_2$, and now we use (i: j) for $\mathbf{s}_1$. Would it be confusing? Can we use $(i:i^\prime)$? } \hendrik{This would create inconsistency with use of a, a'.} 
if the image of each index can be found by linear interpolation; that is, if $a = i + \eta \cdot (j-i)$, then index $a$ is mapped to $a' = i' + \eta \cdot (j'-i')$, rounded to the nearest integer.\footnote{The precise relationship is slightly more complex because the mapping is many-to-many: if the slope of the line is above 1, the roles of $a$ and $a'$ are reversed.  A standard algorithm such as Bresenham's \cite{Bresenham1987CGA} can be used to calculate this mapping.}
In the visualization of the warping path in a cost matrix, this means a straight line connects the points $(i,i')$ and $(j,j')$ (see Figure~\ref{fig:intro_path} for an example). 
%\wannes{This is a clever way to introduce the segments. To be completely correct it needs a rounding operator to be integer-only. How the rounding is done depends on the chosen rasterization algo, we picked Bresenham's (see figure from wikipedia on Bresenham's line algo).}
%\hendrik{Can we just say ``rounded to the nearest integer''?  Judging from the Wikipedia illustration that seems accurate enough to me.}
%\wannes{Yes. It is still important to mention that this is done by an algorithm. Because it is slightly more complicated. Because between two indices a, there can be more indices a'. The algorithm changes the kwadrant in which this interpoliation is done to make it work as a rounding operation (e.g. a (near) vertical line is rotated to a horizontal line.}
% https://en.wikipedia.org/wiki/Bresenham\%27s_line_algorithm
% https://commons.wikimedia.org/wiki/File:Line_1.5x%2B1_--_points.svg

Our goal now is to find a warping path that consists of a relatively small number of uniform subsequence mappings, and yields a distance close to the optimal distance (the user can determine how close). The result is a segmentation of the path and thus also the time series. We use the term ``segment'' both for path segments and for the corresponding segments in the time series. 
%(subsequences uniformly mapped to each other). 
The segmentation shown in Figure~\ref{fig:intro_segmented} shows four such segments.

\section{Dynamic Subsequence Warping}
To address the above problem, we propose a novel method called Dynamic Subsequence Warping (DSW).
The method works in two phases. In a first phase, it constructs increasingly complex subsequence-based approximations of the optimal path. Because that approach is greedy, it may result in more segments than needed. Therefore, in a second phase, a pass is made to further simplify the solution. Sections \ref{sec:phase1} and \ref{sec:ub} focus on the first phase, Section \ref{sec:phase2} on the second.

\subsection{Phase 1: Identifying uniform subsequence mappings}
\label{sec:phase1}
 % for simplifying the optimal warping path with a piecewise linear representation,  while preserving its key alignment characteristics
The first phase of our algorithm is inspired by the Ramer-Douglas-Peucker (RDP) algorithm~\cite{Ramer1972CGIP,Douglas1973Carto}.  RDP simplifies 
% \wannes{do the use `smooth' in the original description? The result of RDP might be less smooth visually.}
a curve by reducing the number of points in a path while retaining its essential shape, and is widely used in visualization due to its efficiency and effectiveness. We adopt the top-down simplification strategy in RDP, starting with an attempt to simplify the entire optimal warping path with a straight line connecting its endpoints.
% \wannes{this is ambiguous. Simplifying the entire path is the overall goal of the algorithm. You mean by replacing the path by one linear function that replaces the entire path.}\simiao{Modified, but verification is needed. If the modification is acceptable, please kindly remove the comment; otherwise please keep it. }

A key step in the simplification framework 
% \wannes{`this' is an ambiguous reference. It can also refer to the first step in this sequence of sentences.} 
is assessing whether a given path (the entire path or a partial path) can be simplified with a straight line connecting its two endpoints, based on a tolerance criterion. If the criterion is met, the simplification is accepted; otherwise, the path is split at the point farthest from the proposed straight line into two partial paths 
(see Figure~\ref{fig:alg_example} for an illustration).
%illustrates the split point selection graphically. %The time series used in this example differ from those in Figure~\ref{fig:intro_segmented} because the gap between the origin path and the segment is clearer in this case, making the tolerance criterion check easier to understand.
This process continues recursively until no further splitting is needed. 

The main difference between the proposed method and RDP is that in RDP, the criterion used for determining {\em where} to split (visual distance to the linear approximation) is also used to determine {\em whether} to split, whereas our method decouples these: {\em whether} to split is based on the {\em cost} of the paths.

\begin{figure}[t]
    % \begin{subfigure}[t]{.32\textwidth}
    % \includegraphics[width=\linewidth]{img/algorithm/ts_pair_overlay.png}
    % \caption{Two time series with a sine wave, where the orange sine wave is compressed compared to the blue sine wave.}
    % \end{subfigure}
    \begin{subfigure}[t]{.48\textwidth}
    \includegraphics[width=\linewidth]{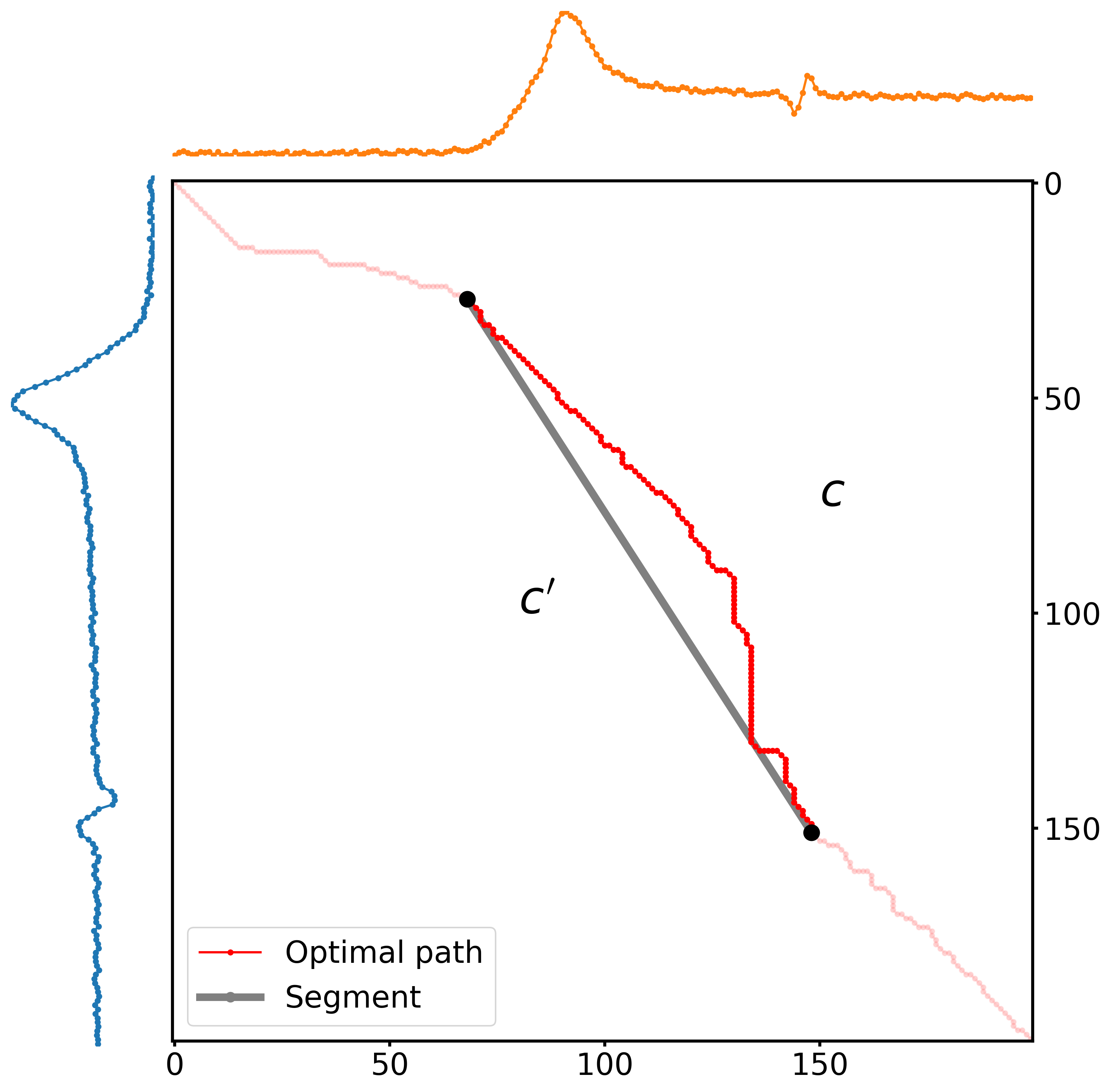}
    \caption{A segment of the optimal path (red) is considered for replacement by a linear segment (grey).  If the cost of the linear segment ($c'$ in the algorithm) does not exceed the optimal cost ($c$ in the algorithm) by more than some tolerance, it is accepted.} %kept.  is (in gray) is within a tolerance criterion relative to the cost of the optimal path (in red). If this criterion is met, the optimal path could be simplified as the segment.}
    \end{subfigure}
    \hfill
    \begin{subfigure}[t]{.48\textwidth}
    \includegraphics[width=\linewidth]{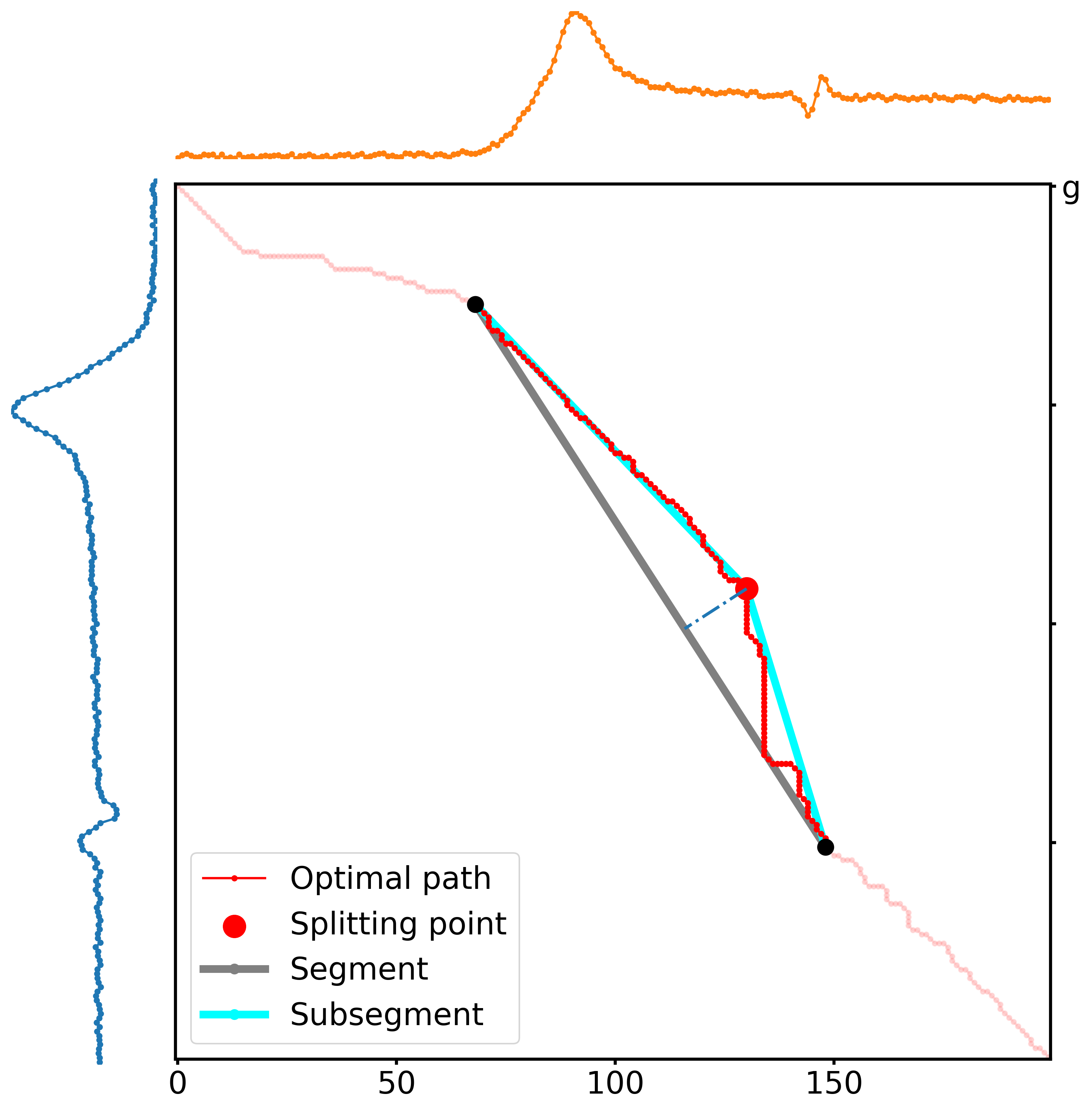}
    \caption{If $c'$ does exceed $c$ beyond the tolerance, the path is split at the point farthest from the linear approximation, and linear approximations are constructed for the subpaths. %If the tolerance criterion is not satisfied, the point on the optimal path furthest from the segment is selected. The optimal path is then split into two subpaths at this point, and each subpath and the corresponding segment will undergo a new tolerance criterion check.}
    }
    \end{subfigure}
\caption{Splitting a path segment.}
\label{fig:alg_example}
\end{figure}
Algorithm~\ref{alg:sop} describes the method in detail. The while loop repeatedly takes a path segment $\Po(b:e)$ (starting with the full path), and compares the cost $c$ of the corresponding optimal path with the cost $\cp$ of the linear approximation.  The cost of the optimal path segment is obtained from the cumulative cost matrix as $c = \Dm(\Po(e))-\Dm(\Po(b)) = \sum_{i=b+1}^{e} \Cm(\Po(i))$. The cost of the linear path segment is obtained by interpolating $\{\Po(b), \Po(e)\}$ into a continuous path $\Pa = (\qa_1, \qa_2, \ldots ,\qa_k)$ (excluding $\Po(b)$),
using rasterization under Bresenham's line algorithm~\cite{Bresenham1987CGA} 
and calculating $\cp = \sum_{i=1}^k \Cm(\qa_i)$ (details in  Appendix~\ref{apx:bresenham}).
%The approximated distance is then given by $\dtwa = \ldisti (\sum_{i=1}^{\la} \Cm(\qa_i))$.
The cost of boundary point $\Po(b)$ is excluded in both calculations, to avoid that when adding the costs of all segments, boundary points are added twice. %The total cost of the simplified path requires adding the costs of all segment and we cannot add costs twice.
By definition, $\cp \geq c$, as $c$ is the cost of the optimal path.  The approximation is kept if $\cp$ is below a certain threshold (line 15 in the algorithm). 
This threshold differs from the original one in RDP. We motivate it in the next subsection. 

The result of this procedure is a filtered set of points $\Ps \subset \Po$ where only the key points from $\Po$ are retained.
$\Ps$ represents a piecewise linear path, with each pair of subsequent items defining a path segment.

% \simiao{A \textit{path segment} inherently represents two corresponding segments, one from each time series. Is there a need to introduce the terminology \textit{path segment} here? Also $(q_l^\prime, q_{l+1}^\prime)$  is the indices of the \textit{path segment} instead of the \textit{path segment} itself.} 
% \wannes{$q$ are coordinates, thus this is a segment of the path, not the indices.}
 % The experiment is conducted on two time series with a sine wave, where the orange sine wave is compressed compared to the blue sine wave. 
 % The red dotted line represents the original optimal warping path. The solid blue line shadowed in gray indicates an unverified simplification attempt. The solely solid blue line represents an accepted simplification. If an attempt to simplify a partial path into a straight line fails, the farthest point on that path from the line is identified and chosen to split the path into two subpaths. The process begins with a gray-shadowed blue line connecting the path's endpoints and ends with a fully simplified path where all blue lines are no longer shadowed.

\begin{algorithm}[t]%[H]
\caption{Phase 1: Identifying uniform subsequence mappings.}
\label{alg:sop}
\begin{algorithmic}[1]
\Procedure{SegmentWarpingPath}{$\Po,\Cm,\uba, \ubm$}
\ARGUMENTS{}
\State{$\Po$: Optimal warping path}
\State{$\Cm$: Cost matrix}
% \State{\textsc{ComputeUpperBound}: A method to compute the bound}
% \State{$\varepsilon$: The variation allowed (see Algorithm~\ref{alg:upperbound})}
\State {$\uba$, $\ubm$: Absolute and relative tolerance parameter}
\ENDARGUMENTS
% \State $\lo \gets |\Po|$
% \State $\co \gets \sum_{q_i \in \Po} \Cm(q_i)$ \Comment{Cost of the full optimal warping path}
% \State $\uba, \ubm \gets \Call{computeUpperbound}{\varepsilon,\dtwo,\lo}$

% \State \Comment{\wannes{The first three lines are a leftover from the more complicated algorithm. This could be simplified and replaced by passing $\uba$ and $\ubm$ as arguments to this algorithm. I already changed Alg 2 to use $\dtwo$ directly}}
\State {$L \gets |\Po|$}
% \State {Let the queue $\mathbf{Q}$ contain paths that remain to be simplified, where each path is represented by its starting index and its ending index. }
\State $\mathbf{Q} \gets \{ (1,L)\}$  \Comment{List of path segments to be considered} 
%paths that remain to be simplified. Each path is represented by its starting index and its ending index.}
% \State {Add $ (1, L)$ to $\mathbf{Q}$.}
\State $\mathbf{R} \gets \{\}$ \Comment{Indices of key points found till now} % to construct the simplified path.}
%\State {Let the list $\mathbf{R}$ contain indices to construct the simplified path.}
\While{$\mathbf{Q} \not= \emptyset$}
% \State $(\ib, \ie)  \gets \textsc{TakeAnElement}(\mathbf{Q})$
\State {Take a path segment $(b, e)$ from $\mathbf{Q}$. }
% \State $q_b,q_e \gets \Po(b),\Po(e)$
\State $c^\prime \gets \Call{LinearPathCost}{\Po, \Cm, b, e}$\Comment{Cost of the simplified path}
%(see Appendix~\ref{alg:bresenham})}
\State $c \gets \Call{OptimalPathCost}{\Po, \Cm, b, e}$ \Comment{Cost of the original path}
% (see Appendix~\ref{alg:dtw_cost})}
% \State $c \gets \sum_{q_i \in \Po(\ib+1 : \ie-1)} \Cm(q_i)$
\State $l \gets e - b$ 
% \If{$c < \max(l\cdot \uba, c\cdot\ubm)$}\Comment{Wrong version}
% \EndIf
\If{$c^\prime \leq \max(c+\frac{l}{L}\cdot \uba, c\cdot (1+\ubm))$}\Comment{Tolerance criterion}
% \State $\mathbf{R} \gets \mathbf{R} + \{\ib, \ie\}$\Comment{Keep segment}
\State {Add $b$ and $e$ to $\mathbf{R}$.}  \Comment{Keep segment}
\Else
\State $s \gets \Call{ArgmaxSpatialDistFromLine}{\Po, b, e }$ \Comment{Split further}
% \State {Identify the point that is furthest from the line $(b, e)$ and assign its index as $s$.} \Comment{Split further}
% \State $\mathbf{Q} \gets \mathbf{Q} + \{(\ib, \is)), (\is, \ie)\}$ 
\State {Add $(b, s)$ and $(s, e)$ to $\mathbf{Q}$.}
\EndIf
\EndWhile\label{alg:sop-while}
\State $\Ps \gets \{\Po(i) \mid i \in \mathbf{R}\}$ 
% \todo{Is $\Ps$ or another notation better here? because $\Pa$ is used to denote the full rasterized path on page 3}
\State $\textbf{return } \Ps$
\EndProcedure
\end{algorithmic}
\end{algorithm}

\subsection{Tolerance Criterion}
\label{sec:ub}

%\simiao{When $\gtm$ is 3 and $\lambda$ is 2, then $\ubm \gets 9$. The tolerance criterion check guarantees that $\dtwa \leq \sqrt{9+1} \cdot  \dtwo $,  but it is a stricter tolerance check because we only need to satisfy $\dtwa \leq (3+1) \cdot \dtwo$. Our history implementation regarding relative tolerance criterion is that $c^\prime \leq c \cdot \ubm$, where $\ubm \gets \ldistm(1+\gtm)$. This is a better tolerance criterion if we aim to satisfy a purely relative global bound, but is difficult to prove in the combined setting. Our current codes are consistent with the paper. }
The tolerance criterion used for each individual segment (line 15 in Algorithm~\ref{alg:sop}) is a crucial component of our simplification framework and requires some discussion. The criterion states that the cost $\cp$ of a linear segment must not exceed the cost $c$ of the corresponding optimal path by a given margin:
\[
\cp \leq \max\bigl(c+\frac{l}{L}\cdot \uba, c\cdot (1+\ubm)\bigr)
\]
It is determined by two tolerance parameters of the algorithm: an absolute tolerance $\delta_{\rm abs}$ and a relative tolerance $\delta_{\rm rel}$.  Both are interpreted as the allowed increase of cost for the full sequence matching.  When considering one particular segment, $\delta_{\rm rel}$ can still be used as is (if all segments have a cost that is at most, say, 5\% higher than the optimal cost for that segment, then the same will hold for the cost of the total path), but $\delta_{\rm abs}$ must be multiplied by $l/L$ with $L$ the length of the path and $l$ the length of the path segment under consideration. 

%The tolerance criterion on each segment is a crucial component of our simplification framework. It determines whether the optimal path for a given segment can be effectively approximated by a straight line. This is achieved by assessing whether the cost of the segment remains within a specified tolerance relative to the optimal path.  Figure~\ref{fig:alg_example} illustrates the tolerance criterion check on a segment. 

Below, we show how setting the $\uba$ and $\ubm$ hyperparameters leads to different types of guarantees on how close the distance of the simplified matching is to the DTW distance. For conciseness, we prove them in a different order than they are stated: we will prove the third and then show how the first two follow from that. We use $\dtwa$ to denote the distance according to the simplified path.
% The bound can be applied in two different ways: (1) a strict form that applies to bound to the entire path and to each segment, and (2) a looser form that only guarantees the bound for the entire path. 
% To decide whether the accumulated cost of the DTW distance of the linear function is similar we check whether it is below the upper bound we compute. 
% We use two bounds to allow for different types of bounds:

\paragraph{Absolute tolerance criterion.} Setting  $\uba = \ldistm(\dtwo+\gta) - \ldistm(\dtwo)$ and $\ubm=0$ guarantees the following bound on the distance of the entire path:
\[
    \dtwa \leq \dtwo  + \gta
\]
    
\paragraph{Relative tolerance criterion.}  Setting $\ubm =  \ldistm( \dtwo\cdot\gtm)  / \ldistm(\dtwo)$ and $\uba=0$ guarantees that the distance will not exceed the DTW distance by more than a certain proportion $\gtm$ of the DTW distance:
\[
    \dtwa \leq \dtwo \cdot (1+ \gtm)
\]

%\paragraph{Absolute tolerance criterion.} This criterion allows the cost of the approximation to exceed the cost of the optimal path by at most a certain value.
%By setting  $\uba = \ldistm(\dtwo+\gta) - \ldistm(\dtwo)$ and $\ubm=0$, this type of tolerance criterion guarantees the following absolute tolerance bound on the distance of the entire path:
%\[
%    \dtwa \leq \dtwo  + \gta
%\]
    
%\paragraph{Relative tolerance criterion.}  This criterion allows the user to state that the cost of the linear segment cannot exceed the cost of the optimal path multiplied by a certain factor.
% We express this tolerance criterion as a function of the optimal distance of the optimal path:  $c^\prime \leq c \cdot (1+\ubm)$. 
%By setting $\ubm =  \ldistm( \dtwo\cdot\gtm)  / \ldistm(\dtwo)$ and $\uba=0$, the tolerance criterion guarantees the relative bound on the distance:
%\[
%    \dtwa \leq \dtwo \cdot (1+ \gtm)
%\]

\paragraph{Combined relative and absolute tolerance criterion.}
While the previous two bounds are easier to understand, in practice we often need the combination of both. One reason is that using only a relative tolerance $\ubm$ precludes simplification when a segment's optimal path has a cost near zero (e.g., the path overfits noise); in such cases an absolute tolerance on top of the relative one is useful. The combination of both criteria is slightly more complicated but for the special case where  
$\ldistm$ is convex and monotonically increasing (and hence, $\ldisti$ concave and monotonically increasing), which the often used $\ldistm(z) = z^\lambda$ satisfies for $\lambda \geq 1$, a bound of the form
\[
    \dtwa \leq \dtwo \cdot (1+\gtm) + \gta
\]
can be proven, as the following proposition shows.

\begin{proposition}
For any convex and monotonically increasing $\ldistm$,
%When $\ldistm(z) = z^\lambda$ with $\lambda\geq 1$, 
setting $\uba = \ldistm(\dtwo+\gta) - \ldistm(\dtwo) $ and  $\ubm = \ldistm( \dtwo\cdot\gtm)  / \ldistm(\dtwo)$ % = \ldistm(\gtm)$ 
guarantees that \[
    \dtwa \leq \dtwo \cdot (1+\gtm) + \gta
\]
\end{proposition}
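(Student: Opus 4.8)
The plan is to run the entire argument in ``cost space'' and convert back to distances only at the very end. I would write the simplified path as a concatenation of the accepted segments indexed by $j$, with optimal segment costs $c_j$, linear-approximation costs $\cp_j$, and segment lengths $l_j$, so that the total optimal cost is $\co = \ldistm(\dtwo) = \sum_j c_j$ and the total simplified cost is $\ca = \sum_j \cp_j$, with $\dtwa = \ldisti(\ca)$. Every accepted segment satisfies the line-15 criterion $\cp_j \le \max\bigl(c_j + \frac{l_j}{L}\uba,\; c_j(1+\ubm)\bigr)$. Writing both arguments of the maximum as $c_j$ plus a nonnegative term and bounding the maximum of two nonnegative quantities by their sum, I obtain the per-segment bound $\cp_j \le c_j + \frac{l_j}{L}\uba + c_j\ubm$.

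Summing over all segments and using $\sum_j l_j \le L$ (the segment lengths telescope to $L-1$, so $\sum_j \frac{l_j}{L} \le 1$) gives the clean cost-space inequality $\ca \le \co(1+\ubm) + \uba$. I would then substitute the prescribed tolerances $\uba = \ldistm(\dtwo+\gta) - \ldistm(\dtwo)$ and $\ubm = \ldistm(\dtwo\gtm)/\ldistm(\dtwo)$ and use $\co = \ldistm(\dtwo)$ to cancel the $\ldistm(\dtwo)$ in the denominator of $\ubm$ and the $-\ldistm(\dtwo)$ term in $\uba$, collapsing the bound to
\[
    \ca \;\le\; \ldistm(\dtwo\gtm) + \ldistm(\dtwo+\gta).
\]

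The remaining and decisive step is to pass back to distance space and recover a single clean bound. Applying the monotone increasing $\ldisti$ to the last inequality, it suffices to show that its right-hand side is at most $\ldistm\bigl(\dtwo(1+\gtm)+\gta\bigr) = \ldistm\bigl(\dtwo\gtm + (\dtwo+\gta)\bigr)$, since then $\dtwa = \ldisti(\ca) \le \dtwo(1+\gtm)+\gta$ follows at once. This reduces the claim to the superadditivity inequality $\ldistm(a) + \ldistm(b) \le \ldistm(a+b)$ with $a = \dtwo\gtm$ and $b = \dtwo+\gta$, and I expect this to be the main obstacle, as it is precisely where convexity enters. The key fact is that a convex $\ldistm$ with $\ldistm(0) \le 0$ is superadditive on $[0,\infty)$: writing each of $a$ and $b$ as a convex combination of $a+b$ and $0$, applying the convexity inequality to each, and adding yields $\ldistm(a)+\ldistm(b) \le \ldistm(a+b) + \ldistm(0)$. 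The hypothesis $\ldistm(0)=0$ (which holds for $\ldistm(z)=z^\lambda$ and for any genuine cost function) supplies the needed $\ldistm(0)\le 0$ and completes the argument.
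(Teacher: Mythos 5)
Your proposal is correct, and its core is exactly the paper's argument: the same decomposition into accepted segments, the same per-segment bound $\cp_j \le c_j + \frac{l_j}{L}\cdot\uba + c_j\cdot\ubm$ obtained by replacing the maximum in the tolerance criterion with a sum, the same summation yielding $\ca \le \co\cdot(1+\ubm) + \uba$, and the same substitution of the definitions of $\uba$ and $\ubm$ to reach $\ca \le \ldistm(\dtwo\cdot\gtm) + \ldistm(\dtwo+\gta)$. Where you genuinely diverge is the final conversion to distance space. The paper applies $\ldisti$ first and then splits the result using subadditivity of the concave inverse, $\ldisti(x+y)\le\ldisti(x)+\ldisti(y)$; you instead stay in cost space one step longer, using superadditivity of the convex $\ldistm$ to absorb $\ldistm(\dtwo\cdot\gtm)+\ldistm(\dtwo+\gta)$ into $\ldistm(\dtwo\cdot(1+\gtm)+\gta)$, and then apply $\ldisti$ once, using only its monotonicity. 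These two lemmas are dual (superadditivity of an increasing convex bijection is equivalent to subadditivity of its concave inverse), so this is a reformulation rather than a new idea, but your version buys a real gain in rigor: you state explicitly that superadditivity requires $\ldistm(0)\le 0$. The paper needs the mirror-image assumption too --- its step ``concave, hence $\ldisti(x+y)\le\ldisti(x)+\ldisti(y)$'' is false without $\ldisti(0)\ge 0$ (e.g.\ $\ldisti(x)=x-1$) --- but never says so; both hold for the intended $\ldistm(z)=z^\lambda$. One bookkeeping nit: you write $\co=\sum_j c_j$ and $\ca=\sum_j \cp_j$, whereas the algorithm's segment costs exclude each segment's starting boundary point, so both identities are off by the cost of the first path point (the paper tracks this as a $\Cm(0,0)$ term on both sides); the discrepancy only works in your favor, so your conclusion stands unchanged.
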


\begin{proof}
\label{proof:combi_global_ub}
 Let the total cost and the length of the optimal warping path $\Po$  be $\co$ and $\lo$ respectively. Assume $\Po$ is simplified to $n$ segments. Let the cost of the $i$-th rasterized segment be $\cai$. Let the cost, and the length of the optimal path on the $i$-th segment be $\coi$, and $\loi$ respectively. Let the accumulated cost on the whole simplified path be $\ca$. Note that (a) $ \sum_{i=1}^n\cai + \Cm(0,0) = \ca$, (b) $\sum_{i=1}^n\coi + \Cm(0,0) = \co$, and (c) $\sum_{i=1}^n\loi + 1 = \lo$.
% For all \( i \in \{1, 2, \dots, n\} \), $\cai \leq \coi \cdot (1 + \ubm) +  \frac{\loi}{\lo} \cdot \uba$.
%

As the maximum of two positive numbers cannot exceed their sum, we have
\begin{align*}
    \cai &\leq \max (\coi \cdot (1+\ubm), \coi + \frac{\loi}{\lo}  \cdot \uba) \\
    %\tag{a}\label{eq:proof-max} \\
         & = \coi + \max (\coi \cdot \ubm, \frac{\loi}{\lo}  \cdot \uba) \\
         &\leq \coi + \coi \cdot \ubm + \frac{\loi}{\lo}  \cdot \uba 
         %&= \coi \cdot (1 + \ubm) +  \frac{\loi}{\lo} \cdot \uba \\ 
\end{align*}
Summing over all segments and adding $\Cm(0,0)$ gives
\begin{align*}
    \sum_{i=1}^n \cai + \Cm(0,0) &\leq \sum_{i=1}^n \coi + \ubm \sum_{i=1}^n \coi + \sum_{i=1}^n \frac{\loi}{\lo}  \cdot \uba + \Cm(0,0)
\end{align*}
Applying (a) to the left hand side and (b) and (c) to the right hand side yields, after simplification and reordering terms,
%\begin{align*}
%    & \coi - c_{\mathbf{P}_s} + \ubm (\coi - c_{\mathbf{P}_s}) + \sum_{i=1}^n \frac{\loi}{\lo} \cdot \uba + c_{\mathbf{P}_s}\\
%    & = \coi \cdot (1+\ubm) - \ubm \cdot c_{\mathbf{P}_s}  + \sum_{i=1}^n \frac{\loi}{\lo} \cdot \uba\\
%    & \leq \coi \cdot (1+\ubm) + \uba
%\end{align*}
%and as the left hand side equals $\ca$, this gives
\begin{align*}
    \ca &\leq \ubm \cdot \co + \co + \uba
\end{align*}
Moving to the distance domain and applying the definition of $\uba$ and $\ubm$ in terms of $\gtm$ and $\gta$, we get
\begin{align*}
    \ldistm(\dtwa) & \leq \ubm \cdot \ldistm(\dtwo) + \ldistm(\dtwo) + \uba \\
    & = \ldistm(\dtwo \cdot \gtm) + \ldistm (\dtwo + \gta)
\end{align*}
With $\ldisti$ monotonically increasing (hence, order-preserving) and concave (hence, $\ldisti(x+y) \leq \ldisti(x)+\ldisti(y)$ for $x,y \geq 0$), this implies
\begin{align*}
             \ldisti(\ldistm(\dtwa)) & \leq \ldisti (\ldistm(\dtwo \cdot \gtm) + \ldistm(\dtwo + \gta))\\
             & \leq \ldisti (\ldistm(\dtwo \cdot \gtm)) + \ldisti(\ldistm(\dtwo + \gta)),
\end{align*}
which simplifies to $\dtwa \leq \dtwo \cdot \gtm + \dtwo + \gta$ and ultimately
\begin{align*}
\dtwa \leq \dtwo \cdot (1+\gtm) + \gta
\end{align*}
%    \ldistm(\dtwa) &\leq \ldistm(\dtwo \cdot \gtm) + \ldistm(\dtwo + \gta)\\
%             \ldisti(\ldistm(\dtwa)) &\leq \ldisti (\ldistm(\dtwo \cdot \gtm) + \ldistm(\dtwo + \gta)) \\
%            &\phantom{<<}\text{Using vector $(\dtwo \cdot \gtm, \dtwo + \gta)$}\\
%            |\dtwa| &\leq \|(\dtwo \cdot \gtm, \dtwo + \gta)\|_\lambda \\
%            &\phantom{<<}\text{Via the Minkowsky inequality: $\|a\|_\lambda \leq \|a\|_1$} \\
%             |\dtwa| &\leq \|(\dtwo \cdot \gtm, \dtwo + \gta)\|_1 \\
%              |\dtwa| &\leq |\dtwo \cdot \gtm| + | \dtwo + \gta| \\
%              &\phantom{<<}\dtwa\geq 0,\ \dtwo\geq 0,\ \gtm \geq 0,\  \gta \geq 0\\
%              \dtwa &\leq \dtwo \cdot (1 + \gtm) + \gta
%\end{align*}
\hfill $\ensuremath{\Box}$
\end{proof}

When $\ubm=0$ or $\uba=0$, the proof remains valid; moreover, the concavity condition %$\ldistm(z)=z^\lambda$ 
can be dropped in this case, since $\ldisti$ is no longer applied to a sum. This leads to the following corollary. 
\begin{corollary}
For a monotonically increasing $\ldistm$, setting  $\uba = \ldistm(\dtwo+\gta) - \ldistm(\dtwo)$ and $\ubm=0$ guarantees
$\dtwa \leq \dtwo  + \gta$, and setting $\ubm =  \ldistm( \dtwo\cdot\gtm)  / \ldistm(\dtwo)$ and $\uba=0$ guarantees $\dtwa \leq \dtwo \cdot (1+ \gtm)$.
\end{corollary}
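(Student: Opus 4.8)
The plan is to obtain the corollary as the two degenerate cases of the Proposition rather than reproving it from scratch: I would re-run the argument of the Proposition once with $\ubm=0$ and once with $\uba=0$, and observe that in each case one of the two tolerance terms disappears, so that the final application of $\ldisti$ lands on a single argument instead of a two-term sum. This is precisely the structural reason the concavity of $\ldisti$ (used in the Proposition to split $\ldisti(x+y)\le\ldisti(x)+\ldisti(y)$) should no longer be needed, leaving monotonicity to carry the last step.

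First I would treat the absolute case ($\ubm=0$). With the relative branch switched off, the per-segment criterion reduces to $\cai\le\coi+\frac{\loi}{\lo}\cdot\uba$, so no ``$\max\le$ sum'' loosening is required. Summing over the $n$ segments and reusing the bookkeeping identities (a)--(c) of the Proposition's proof, together with $\sum_i \loi/\lo\le 1$, gives $\ca\le\co+\uba$. Moving to the distance domain and substituting $\uba=\ldistm(\dtwo+\gta)-\ldistm(\dtwo)$, the right-hand side telescopes to the single term $\ldistm(\dtwo+\gta)$, i.e. $\ldistm(\dtwa)\le\ldistm(\dtwo+\gta)$. Applying the order-preserving $\ldisti$ to this single value immediately yields $\dtwa\le\dtwo+\gta$, with no appeal to concavity.

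Next I would treat the relative case ($\uba=0$). Here the absolute branch is switched off and the per-segment criterion reads $\cai\le\coi\cdot(1+\ubm)$. Summing and again invoking (a)--(c) (now also using $\Cm(0,0)\ge0$ to absorb the boundary term) gives the multiplicative bound $\ca\le(1+\ubm)\co$, hence $\ldistm(\dtwa)\le(1+\ubm)\ldistm(\dtwo)$. The intended move is to keep this in multiplicative form and apply $\ldisti$ to the single scaled quantity $(1+\ubm)\co$, rather than expanding it into $\ldistm(\dtwo)+\ldistm(\dtwo\gtm)$ and splitting the sum; substituting $\ubm=\ldistm(\dtwo\gtm)/\ldistm(\dtwo)$ should then recover $\dtwa\le\dtwo(1+\gtm)$.

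The hard part will be justifying this last collapse rigorously. Concretely, the relative bound requires $\ldisti\bigl((1+\ubm)\co\bigr)\le(1+\gtm)\dtwo$, equivalently $(1+\ubm)\ldistm(\dtwo)\le\ldistm\bigl(\dtwo(1+\gtm)\bigr)$; after substituting the definition of $\ubm$ this is exactly a super-additivity statement $\ldistm(\dtwo)+\ldistm(\dtwo\gtm)\le\ldistm(\dtwo+\dtwo\gtm)$ for the cost map. I would verify it directly for the targeted monotonically increasing cost functions $\ldistm(z)=z^\lambda$ with $\lambda\ge1$, where convexity makes it hold, and I would flag that this is precisely the residue of the Proposition's convexity hypothesis: for the absolute case no such residue survives (the term $\ldistm(\dtwo\gtm)$ vanishes as $\gtm=0$), which is what lets that half of the corollary stand under monotonicity alone, whereas the relative half genuinely leans on the super-additivity of $\ldistm$.
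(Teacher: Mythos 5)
Your proposal follows the same overall route as the paper: both obtain the corollary by specializing the Proposition's proof to the two degenerate settings, and your absolute case is exactly the paper's argument --- with $\ubm=0$ the max in the tolerance criterion collapses, summation gives $\ca \leq \co + \uba$, and substituting $\uba = \ldistm(\dtwo+\gta)-\ldistm(\dtwo)$ leaves the single value $\ldistm(\dtwo+\gta)$ inside $\ldisti$, so monotonicity alone finishes.

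Your handling of the relative case, however, diverges from the paper --- and you are right to diverge. The paper's one-line justification (concavity ``can be dropped \ldots\ since $\ldisti$ is no longer applied to a sum'') is only valid for the absolute half. In the relative half one gets $\ldistm(\dtwa) \leq (1+\ubm)\,\ldistm(\dtwo)$, and substituting $\ubm = \ldistm(\dtwo\cdot\gtm)/\ldistm(\dtwo)$ turns the right-hand side back into the sum $\ldistm(\dtwo)+\ldistm(\dtwo\cdot\gtm)$; passing from there to $\dtwa \leq \dtwo\,(1+\gtm)$ requires precisely the super-additivity $\ldistm(\dtwo)+\ldistm(\dtwo\cdot\gtm) \leq \ldistm\bigl(\dtwo\,(1+\gtm)\bigr)$ that you isolate. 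Monotonicity alone does not supply it: take $\ldistm(z)=z^{1/2}$ (monotonically increasing, not convex), $\dtwo=1$ (so $\co=1$) and $\gtm=1$; then $\ubm=1$, so a simplified path of cost $\ca = 2$ passes the criterion, yet $\dtwa = \ldisti(2) = 4 > 2 = \dtwo\,(1+\gtm)$. So the relative half of the corollary, as stated for arbitrary monotonically increasing $\ldistm$, is too strong; it is correct under the super-additivity you retain (which convexity together with $\ldistm(0)=0$ supplies, in particular $\ldistm(z)=z^{\lambda}$ with $\lambda\geq 1$). In short: your proof is sound, matches the paper where the paper is right, and where it departs from the paper it exposes a genuine gap in the paper's own argument rather than containing one; the corollary's hypotheses (or its relative bound) should be amended along the lines you indicate.
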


\subsection{Phase 2: Merging segments}
\label{sec:phase2}

The construction of the simplified path discussed before is greedy: splits are introduced one at a time, and the introduction of more split points later on may make earlier split points no longer necessary.  For this reason, in phase 2, a pass is made over segments to see which ones can be merged with their neighbor without violating tolerances. The algorithm for doing this is quite straightforward and can be found in Appendix~\ref{apx:merge}.  The main points worth mentioning are:
\begin{itemize}
\item The order in which segments are considered for merging is short-to-long. The motivation for this is that short segments are considered less desirable from an explanatory point of view: their properties are more likely to be affected by noise.  
\item In principle, a global tolerance criterion can be used at this stage (checking the cost of the full sequence matching rather than one segment).  The algorithm included in Appendix~\ref{apx:merge} uses the local criterion.\footnote{We also provide the global version in our code: 
% \url{https://github.com/<anonymized>}
\url{https://github.com/wannesm/dtaidistance}
}
\end{itemize}

% \simiao{This is section 4.3, would the reader think that we are talking about 4.2? solved by mentioning earlier.} 
%The second phase 
%of the algorithm is to further simplify the solution found in the first phase.
%Since the first phase follows a top-down, greedy approach to selecting splitting points, the presence of some chosen points is sufficient, but not necessary, to meet the required tolerance criterion. In other words, certain splitting points may be omitted without compromising the fulfillment of the tolerance criterion.
%To address this, after the first phase, we follow a bottom-up, merging approach to identify which neighboring segments can be merged\footnote{We also provide a more flexible approach, where instead of checking the criterion tolerance, we verify whether the global bound is satisfied during a merging attempt. It is  provided at \url{https://github.com/<anonymized>}}.
%The goal is to have fewer segments, therefore the order in which pairs of neighboring segments are tried to be merged is according to the smallest segment length.  The complete algorithm is presented in  Appendix~\ref{apx:merge}. 

Figure~\ref{fig:merging} illustrates the effect of merging on an example. In Phase 1, the path was consecutively split at E, C, B and D.  In Phase 2, the segments (B, C) and (C, D) are merged into (B, D), which is subsequently merged with (D, E) to form (B, E), all while maintaining adherence to the tolerance criterion. Apparently, path (B, E) captures the most significant relationship (compression) between the two time series. The orange sine wave is well aligned to the blue sine wave, without introducing distracting or insignificant details.
% splitting orders: E, C, B, D
% merging orders: C, D

% Exploring all possible path relaxations from the original optimal warping path (i.e., merging segments) is computationally impractical. Therefore,  we use an efficient path simplification algorithm (see Algorithm~\ref{alg:sop}). However, this approach has a drawback: our current strategy selects splitting points based on their spatial distance from the proposed simplified path (line), but does not consider the cost of partial paths. While the strict bound check ensures that each segment meets the required simplification constraints, once a splitting point is chosen, it cannot be removed. However, in some cases, certain splitting points could be omitted while still preserving the required bounds.
% To mitigate this issue, we can apply an extra (optional) pruning step that softens these choices. Where Algorithm~\ref{alg:sop} is a top-down method, our pruning operation (see Algorithm~\ref{alg:merge}) is bottom-up. It iteratively attempts to merge neighboring segments, prioritizing the shortest remaining segments. If the accumulated cost of the newly merged segment is below the bound, it is retained. 
% The pruning operation is more expensive but we only need to apply this to the segments that are returned by Algorithm~\ref{alg:sop}. 

% We only need to apply the pruning operation to the segments that are returned by Algorithm~\ref{alg:sop}. 

\begin{figure} %[H]
\begin{subfigure}[t]{.45\textwidth}
	\includegraphics[width=\linewidth]{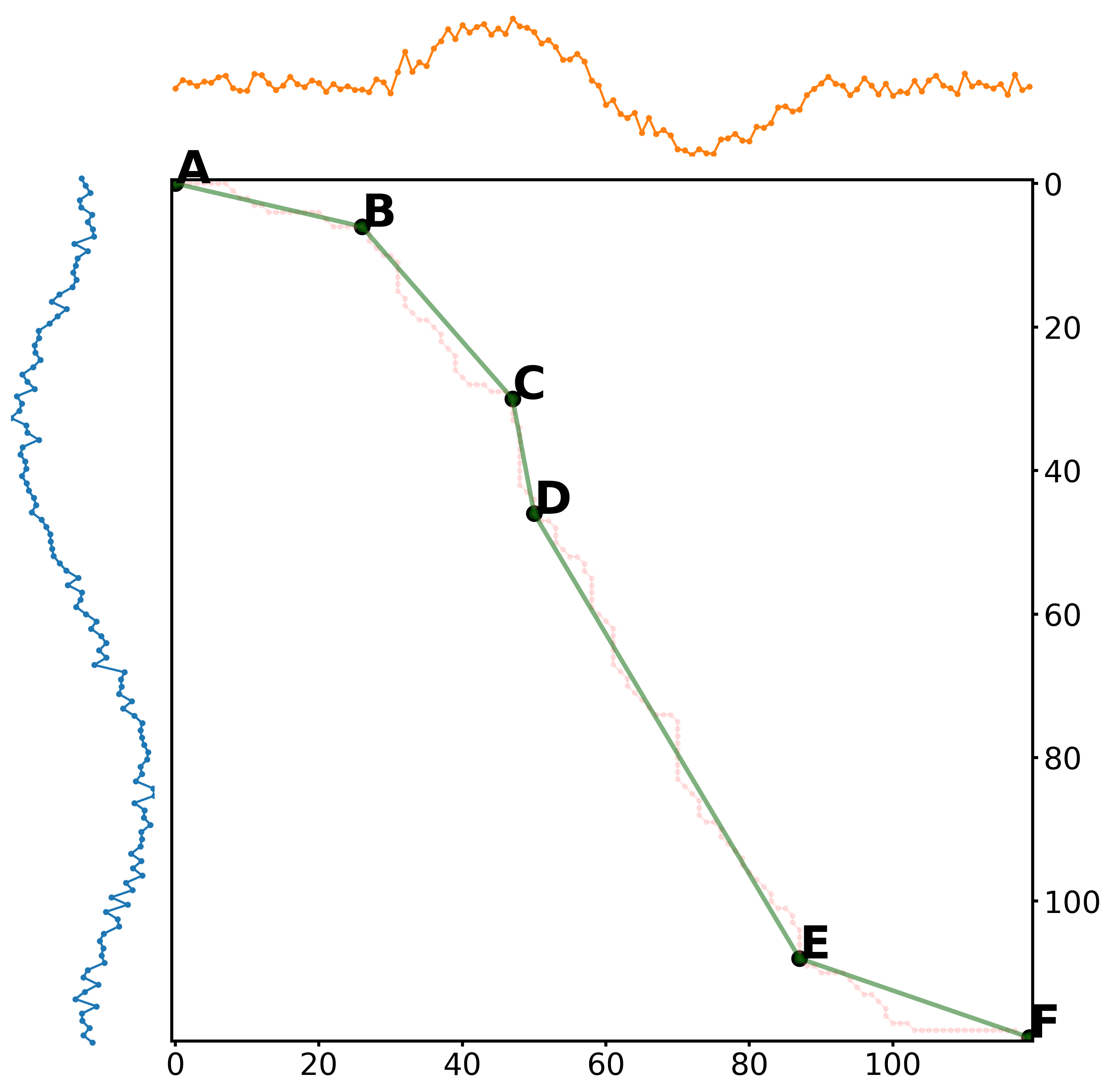}
        \caption{Before merging}
\label{fig:approx_path_sine_nopruning}
\end{subfigure}%
\hfill% 
\begin{subfigure}[t]{.45\textwidth}
  	\includegraphics[width=\linewidth]{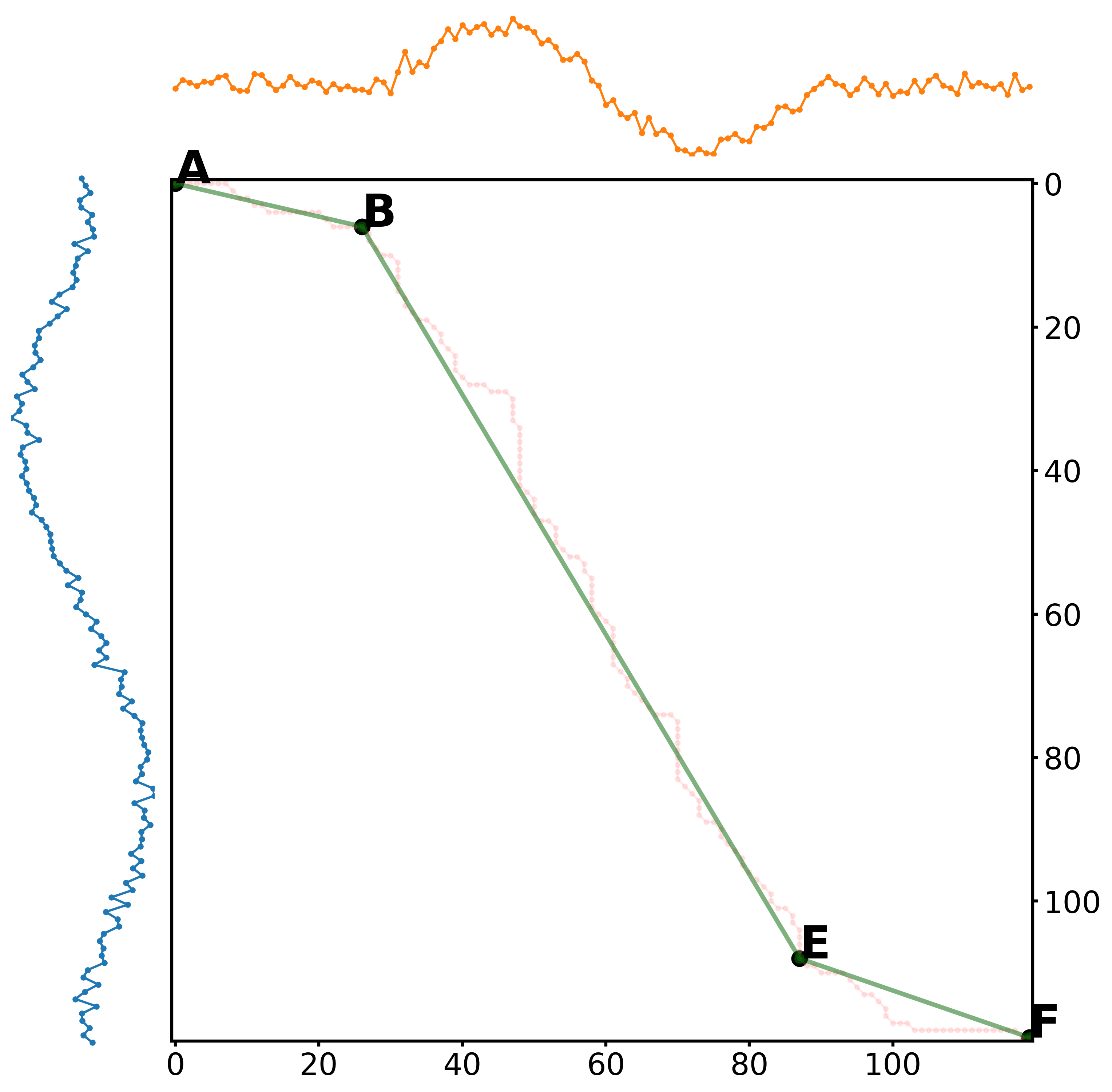}
\caption{After merging} 
\label{fig:approx_path_sine_withpruning}
\end{subfigure}
\caption{Before vs. After the merging phase.}
\label{fig:merging}
\end{figure}

% \noindent\todo{TODO: we have two options, we should pick one:}

% \todo{Option 1: Algorithm \ref{alg:sopc}}
% We can repeat the algorithm but use accumulated cost instead of spatial distance to decide on the split points. This gives a better approximation but is more expensive.
% This method only works with global bounds.

% \todo{Option 2: Algorithm \ref{alg:merge}}
% We can apply a bottom-up pruning operation, whereas the first algorithm is top-down. For every pair of segments we can try to prune this if the total cost of the approximated path is still below the bound.
% This method also still works with local and global bounds.

% \todo{WM: I have a slight preference for option 2. It is easier to explain. It is less confusing because it is a completely different method. It is bottom-up vs top-down, which is consistent with other pruning methods. It supports both local and global bounds. The computational complexity of both options is the same ($n^2$ worst-case), but the second will often need less steps as it only prunes where the first will need to rebuild the approximation.}
% \todo{SL: I also have a preference for option 2. I noticed that option 1 might sometimes simplify the path too much, but I haven't noticed this behaviour when using option 2.}

%%%%%%%%%%%%%%%%%%%%%%%%%%%%%%%%%%%%%%%%%%%%%%%%%%%%%%%%%%
% \section{Interpretation and Visualization}
%%----------------------------------------------
\section{Interpreting and Quantifying the Simplified Path}
% \todo{Use the ECML latex form by Tue}

The segmentation obtained using the method just described can be used in many ways to characterize the relationship between two time series.  The relationship can be visualized by connecting only the key points (boundaries between segments) pairwise, rather than all points.  The segments are then clearly visible, and show how segments map to very similar segments (modulo shift and uniform compression) in the other series. This was illustrated in Figure~\ref{fig:intro_segmented}.

Besides visualisation, the method makes it possible to numerically characterize individual segment mappings. For instance, to characterize the amount of compression of a segment $\mathbf{s}_1(b:e)$ mapped to  $\mathbf{s}_2(b':e')$, we can define absolute and relative compression (expressed as a log-ratio), relatively denoted $K$ and $\kappa$, as follows:
        \[
        K = (e'-b') - (e-b)
        \]
   
        \[
        \kappa = \log\left(\frac{e'-b' } {e-b }\right)
        \] 
A negative $K$ or $\kappa$ indicates compression (from $\mathbf{s}_1$ to $\mathbf{s}_2$), a positive $K$ or $\kappa$ expansion. In the visualization in Figure~\ref{fig:intro_segmented}, the length of the orange block is $|K|$. $\kappa$ is indirectly visualized in the path visualization in the cost matrix: it is related to the slope of a path segment (with $\kappa=0$ being a slope of $-1$ and $\kappa=\pm \infty$ corresponding to horizontal/vertical path segments).  See Figure~\ref{fig:simplified_path_illustration} for an illustration.

Similarly, the time shift between a segment and the one it maps to can be characterized using a single number.  Depending on the context, one could choose the time  difference between starting points of the segment, the end point, the minimal time difference between any points mapped to each other, etc. These definitions correspond to the horizontal distance from the diagonal of respectively the leftmost, rightmost, and closest-to-the-diagonal point of the segment.  The shift according to the third definition can be calculated efficiently as 
\[
 \sigma =
 \begin{cases}
      0, & \text{if } (b'-b)  \cdot (e'-e) < 0  \\
      %\text{\Comment{The path segment crosses the diagonal.}}\\ 
     b'-b & \text{if } |b'-b| \leq |e'-e| \\
     e'-e & \text{otherwise}
 \end{cases}
\]

It can also be useful to visualize the vertical (``amplitude'') difference between the segments after warping.  In Figure~\ref{fig:intro_segmented}, this difference is visualized as orange shading. This mapping is shown for individual indexes. Let $\Paf$ denote the full rasterized path obtained from the subsequence mappings. As the mapping is not one-to-one, a single index of $\mathbf{s}_1$ may map to multiple indexes of $\mathbf{s}_2$.  The shading around an index $i$ of $\mathbf{s}_1$ therefore goes from $v_i$ to the extremal values of indices matched with $i$; that is, to $v_i+\alpha(i)$ and $v_i-\beta(i)$ with
\begin{align*}
    % \alpha(i)  \gets \max |\tst(j) - \tsf(i)| \quad \text{subject to} \quad (i, j) \in \Paf \land \tst(j) \geq \tsf(i) \\
    % \beta(i)  \gets \max |\tst(j) - \tsf(i)| \quad \text{subject to} \quad (i, j) \in \Paf \land \tst(j) \leq \tsf(i) 
       \alpha(i)  \gets \max \{ |\tst(j) - \tsf(i)|  \mid   (i, j) \in \Paf \land \tst(j) \geq \tsf(i), \quad 0\}\\
    \beta(i)  \gets \max \{|\tst(j) - \tsf(i)| \mid (i, j) \in \Paf \land \tst(j) \leq \tsf(i), \quad 0\}
\end{align*}

\begin{figure}[t]
\centering
\includegraphics[width=0.5\linewidth]{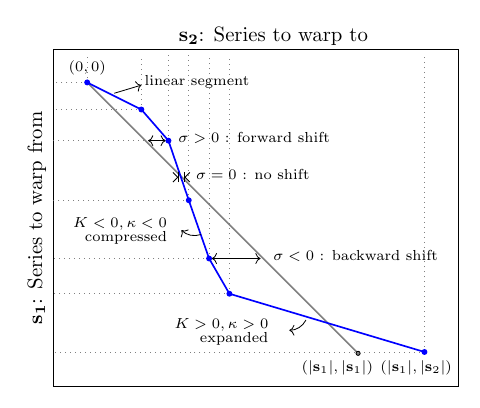}
\caption{The simplified path (in blue) and how the segments can be geometrically interpreted in terms of time-shift (forward shift, backward shift or no shift) and compression (compressed or expanded).
% \simiao{Doesn't $\delta > 0$ indicates a delay on the bseries to warp to?}\wannes{check the code, I don't know by heart. Would also be good to add the formulas to the definitions in the next paragraph. Then there is no doubt.}
}
\label{fig:simplified_path_illustration}
\end{figure}

%%----------------------------------------------
\section{Showcases}
\label{sec:examples}

To illustrate the use of Dynamic Subsequence Warping (DSW), we present three examples where our approach is applied to pairs of time series taken from the widely used UCR collection of time series datasets~\cite{keogh14ucr}. Similar time series are found by selecting time series with the same label.

\paragraph{Example 1:}
We show how the comparison between two time series with similar heart beats from the ECGFiveDays dataset is easier to interpret with the DWS visualization. The DSW visualization in Figure~\ref{fig:sc_ecg_dsw} effectively captures meaningful subsequence mappings.\footnote{Note that finding meaningful mappings to show (dis)similarity is not the same as finding semantically meaningful segments.  E.g., in ECG signal analysis, experts typically segment the signal into intervals corresponding to the distinct phases of cardiac activity. DSW does not have this background knowledge. However, DSW could be constrained to yield subsequences compatible with semantic segments by starting the path splitting at semantic segment boundaries (and preventing merging around those boundaries), thus combining its strengths with those of semantic segmentation.} The peaks are mapped to each other and the difference between the two series is clearly visualized. First, the peak around index 50, the shading shows that the peak has a different height but is nearly identical otherwise. Second, the peak around index 80 is shown to be both higher and wider, visualized by the amplitude shade and the expansion of the segment.
The standard visualization with point-to-point alignments, shown in Figure~\ref{fig:sc_ecg_dtw}, is more difficult to interpret, especially because of the extreme warping towards the end of the series (i.e., $1$-to-$n$ or $n$-to-1 mapping with $n$ large). This occurs because DTW is sensitive to noise, even if the benefits of such extreme warping are minimal. The DSW visualization is robust to such noise as it identifies that these benefits are small and can simplify the representation accordingly. 
When inspecting the path in Figure~\ref{fig:sc_ecg_both}, one can observe the extreme warping in the lower right corner. The simplification would have been rejected by the RDP algorithm since it relies solely on spatial information and the original path is too far from the simplification. DSW, on the other hand, realizes the path segment with the extreme warping and the simplified path have a similar cost even though they are spatially distant of each other.

\begin{figure} [t]
    \begin{subfigure}[t]{0.32\textwidth}
            \includegraphics[width=\linewidth]{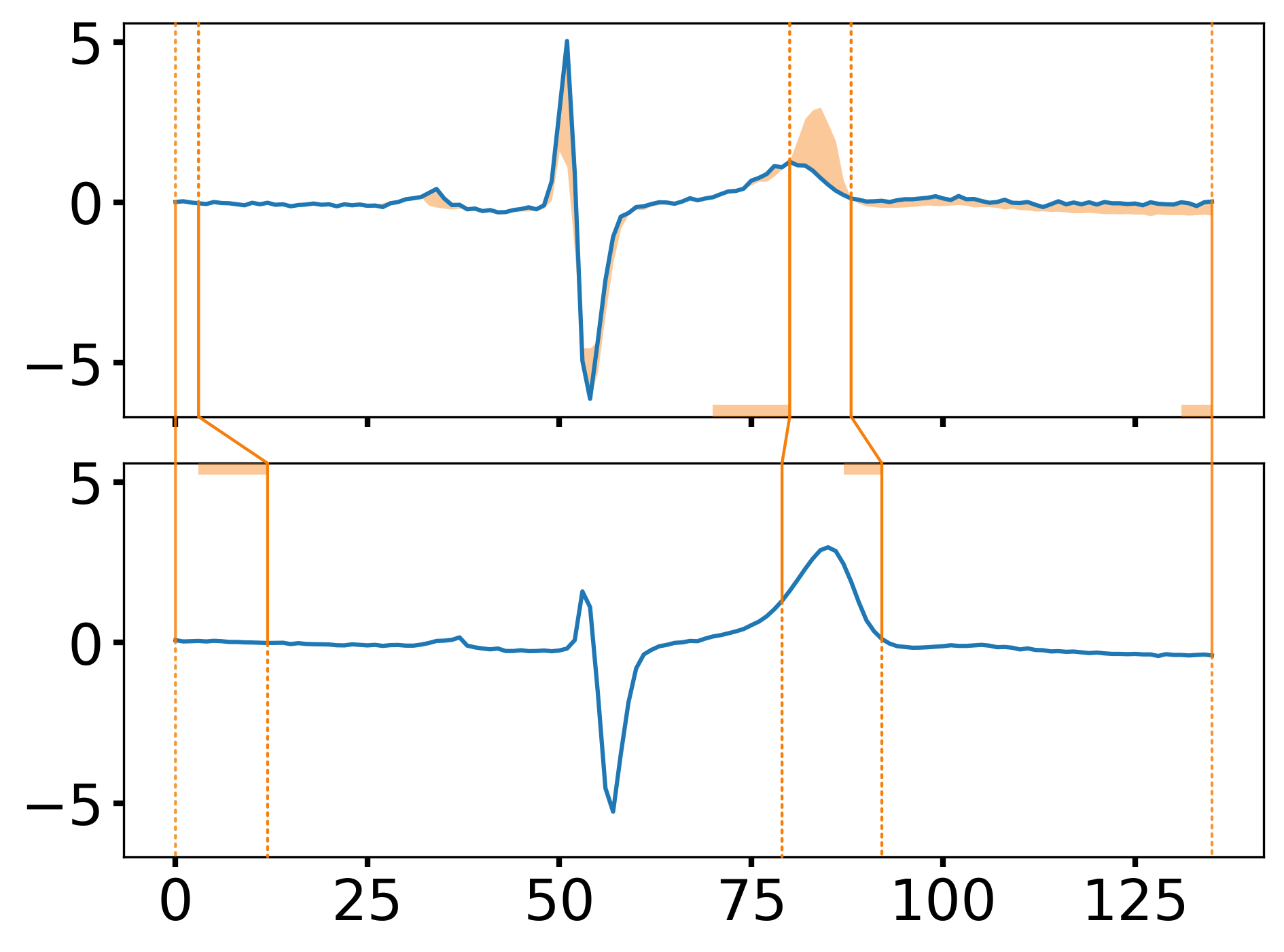}
                \caption{DSW visualization}
            \label{fig:sc_ecg_dsw}
    \end{subfigure}
    \begin{subfigure}[t]{0.32\textwidth}
            \includegraphics[width=\linewidth]{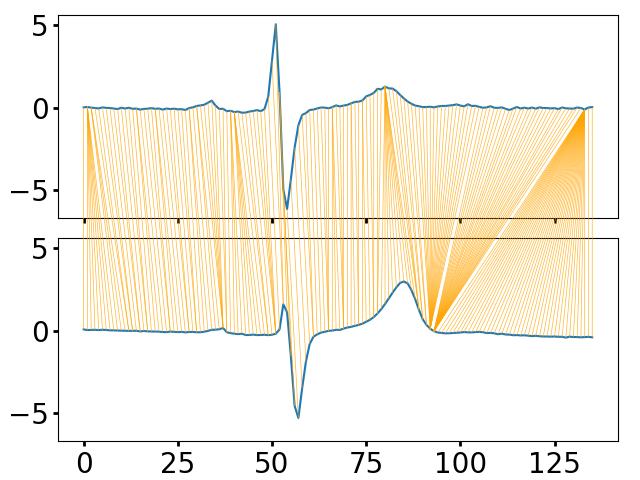} 
                \caption{Standard Visualization %\simiao{Should we have different colors for two series so that they could be easily map to where they are in Figure~\ref{fig:sc_ecg_both}}
                }
            \label{fig:sc_ecg_dtw}
    \end{subfigure}
    \begin{subfigure}[t]{0.32\textwidth}
            \includegraphics[width=\linewidth]{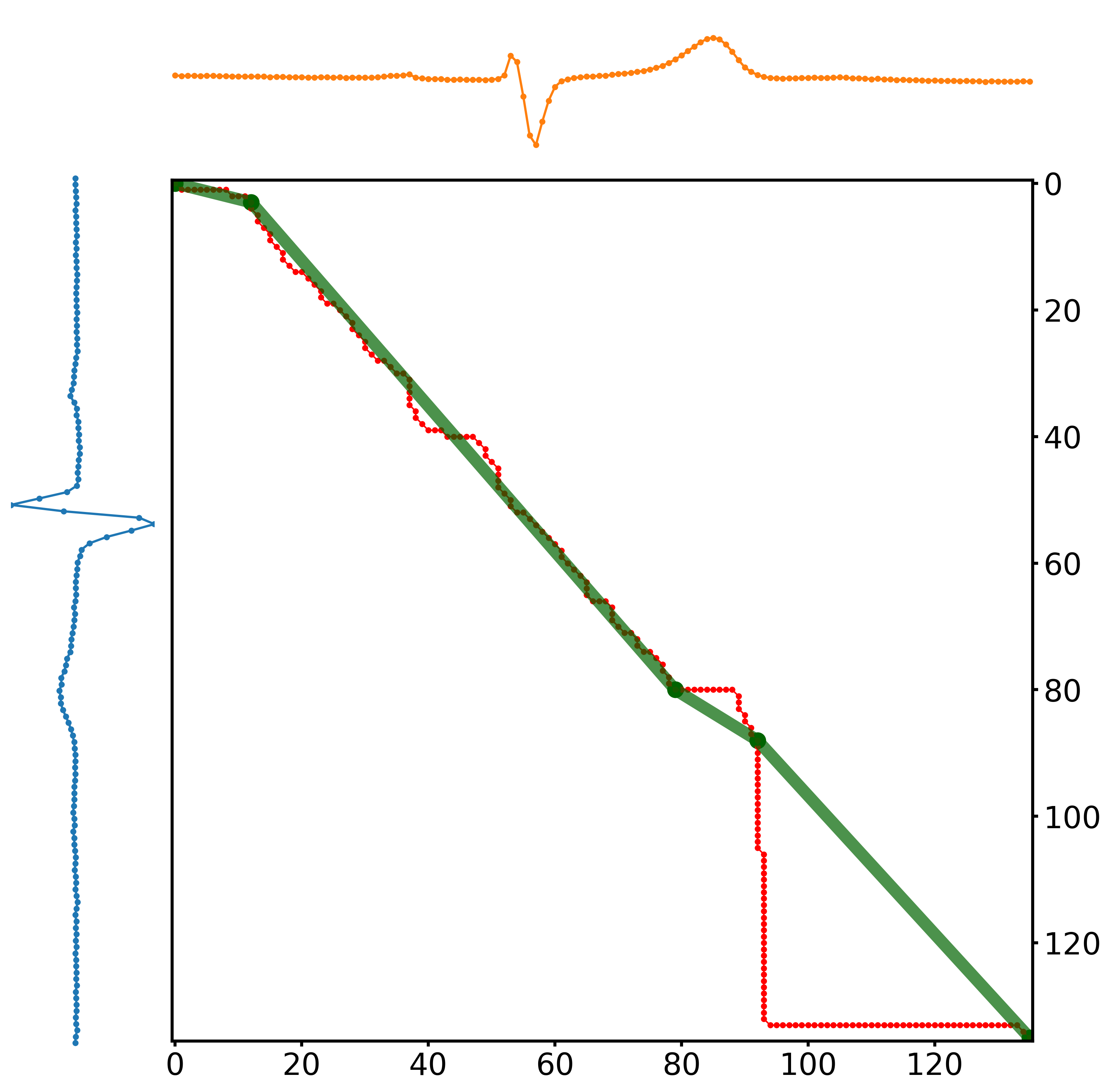} 
                \caption{Original (red) vs. Simplified (green) Path}
            \label{fig:sc_ecg_both}
    \end{subfigure}
     \caption{Comparison of (a) the DSW visualization vs. (b) visualization showing all matched values for two time series where (c) the path overfits to noise.}
    \label{fig:ecg}
\end{figure}
    
\paragraph{Example 2:}
We show how the combined relative and absolute tolerance criterion is useful in practice. Figure~\ref{fig:sc_umd_combi} depicts the DSW visualization for two time series from the UMD dataset with the combined tolerance criterion. After switching to the relative tolerance criterion (using the same value for $\ubm$) in Figure~\ref{fig:sc_umd_rel}, we can observe that the compression over the valley is still identified. But the beginning of the series is now split in multiple segments because they have a very low cost and DSW is overfitting the noise. Adding the absolute tolerance allows the simplification a bit more margin to simplify segments with small deviations of the values.

% The following results are based on a pair of time series with the same label from the UMD dataset in the 2018 UCR dataset~\cite{keogh14ucr}. Both visualizations represent DSW visualizations, differing in their applied tolerance criteria. Figure~\ref{fig:sc_umd_combi} presents that the visualization obtained using the combined relative and absolute tolerance criterion effectively identifies important subsequence matchings. Figure~\ref{fig:sc_umd_rel} shows the visualization obtained using only the relative tolerance criterion, set to the same value as in the combined tolerance criterion. By comparing two figures, it is clear that the combined tolerance criterion enhances simplification, particularly when some subsequences are retained due to their low cost, which imposes a strict relative tolerance criterion. The combined tolerance criterion relaxes this constraint by incorporating the absolute threshold and using the maximum between the absolute and relative tolerance criteria.  

\begin{figure}[t]
\centering
 \begin{subfigure}{.40\textwidth}
 \centering
	\includegraphics[width=\linewidth]{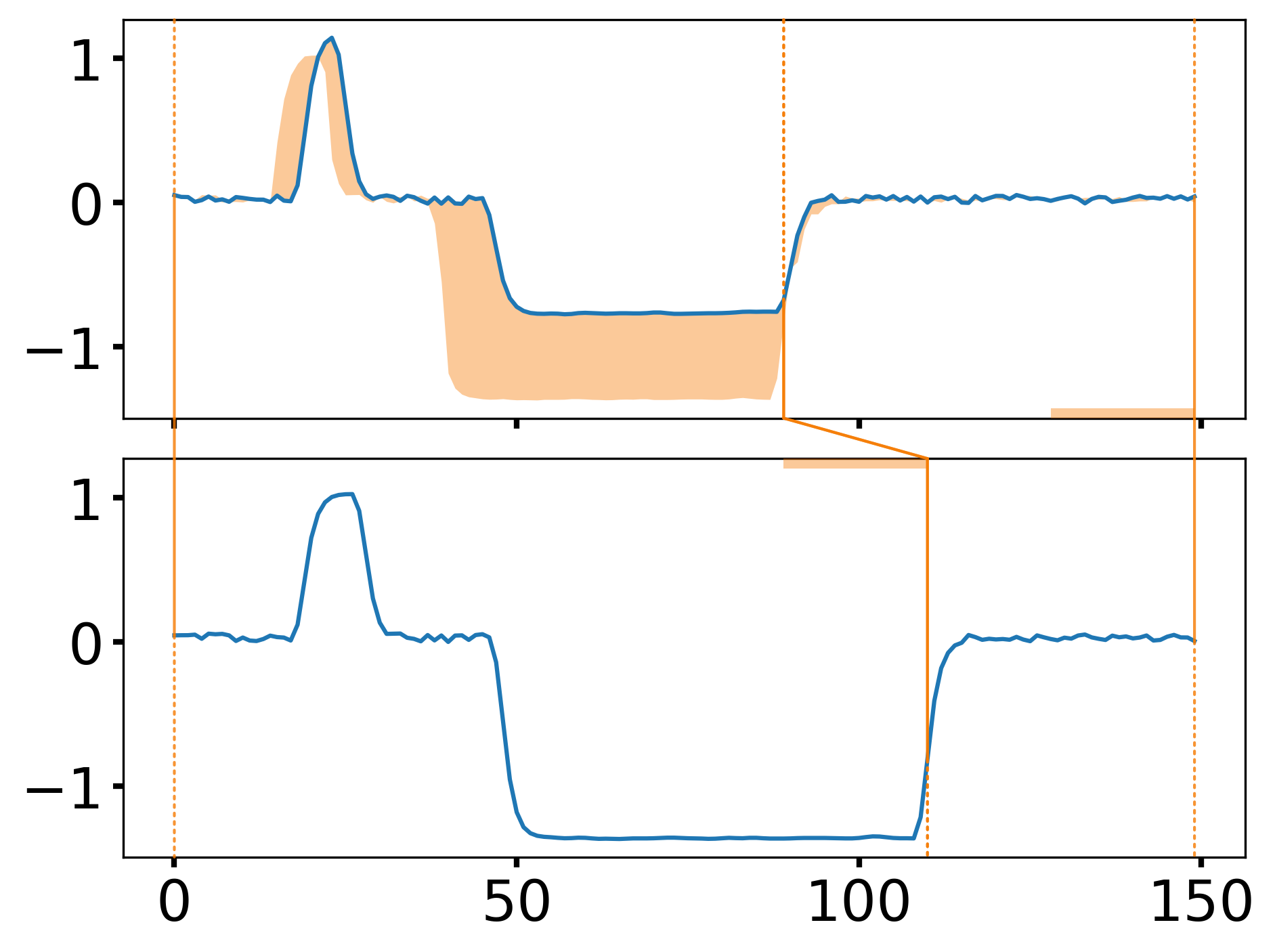}
    \caption{}
    \label{fig:sc_umd_combi}
\end{subfigure}
\hspace{1em}
\begin{subfigure}{.40\textwidth}
\centering
	\includegraphics[width=\linewidth]{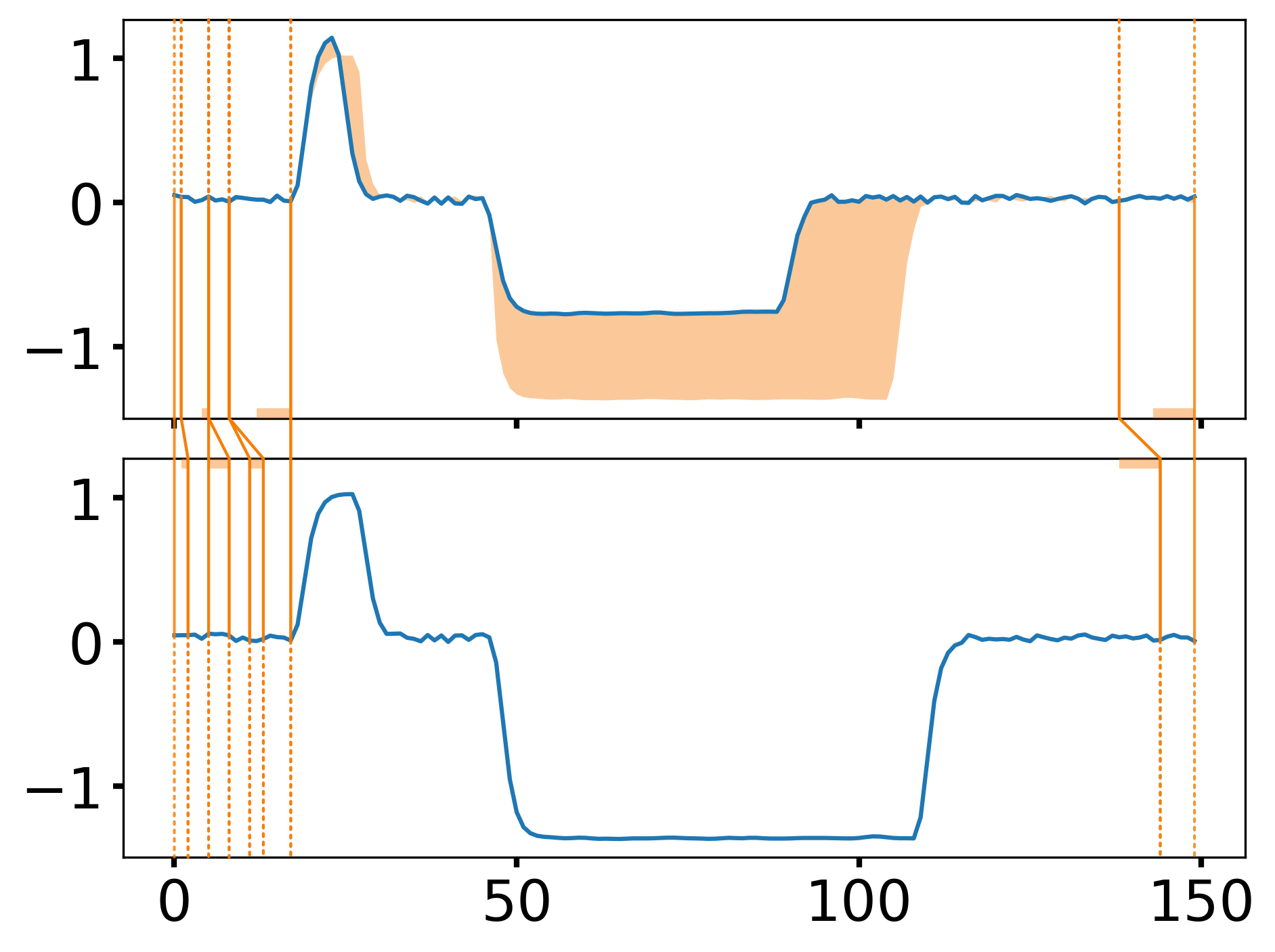}
    \caption{}
    \label{fig:sc_umd_rel}
\end{subfigure}
\caption{The DSW visualization using (a) the combined tolerance criterion vs. (b) the relative tolerance criterion}
\label{fig:sc_umd}
\end{figure}

\paragraph{Example 3:}
In these examples, to align two similar series, a large shift and compression is needed. This complicates the point-to-point visualization because the connecting lines get close to each other, or even cross. The examples in Figure~\ref{fig:sc_cbf_and_two_pattern}, taken from the CBF and TwoPatterns datasets, illustrate this effect and show how the DSW visualisation provides a more clear and intuitive subsequence-to-subsequence matching.

\begin{figure}[t]
\centering
\begin{subfigure}[t]{.32\textwidth}
    \centering
    \includegraphics[width=\linewidth]{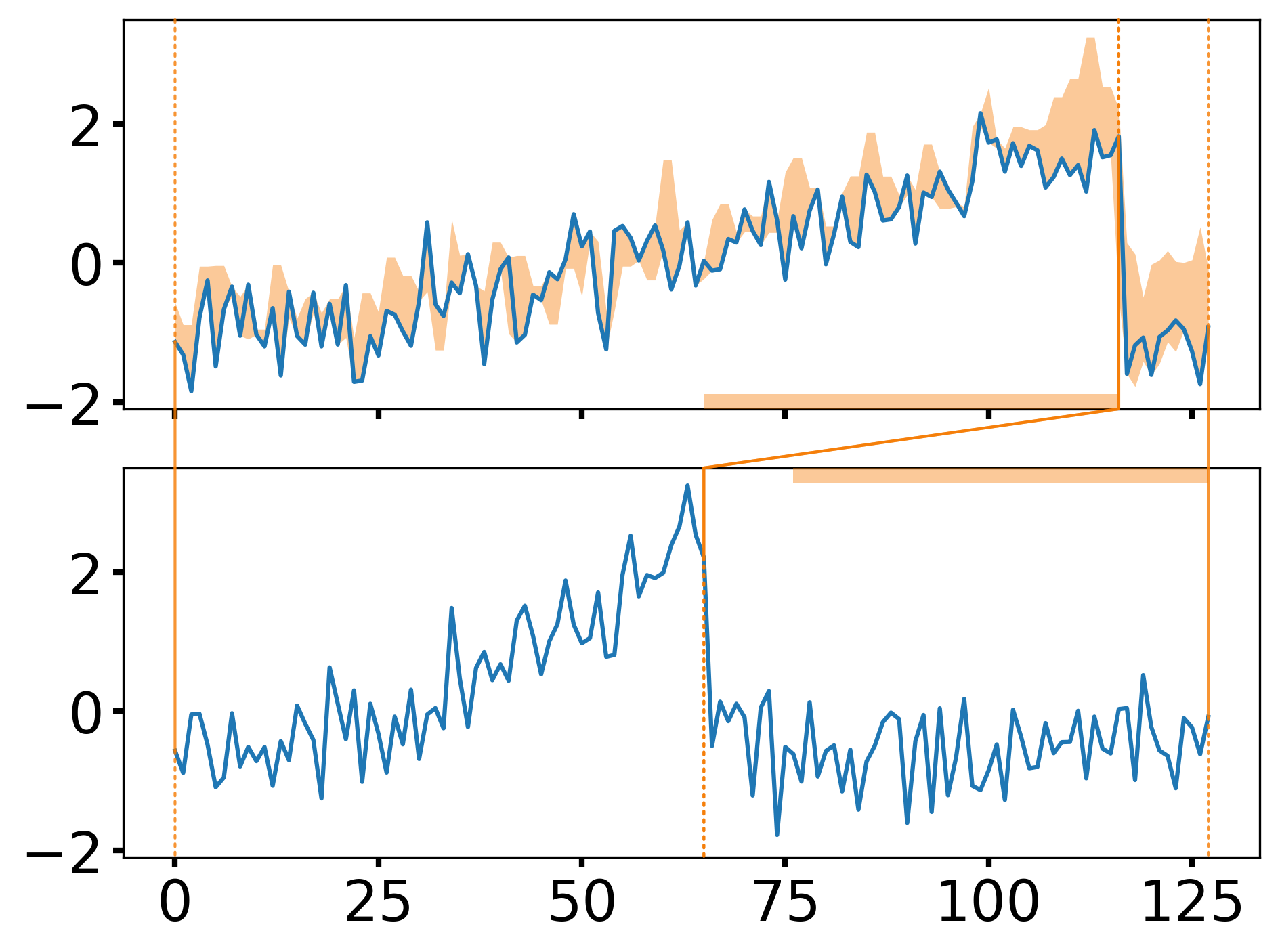}
    \caption{}
    \end{subfigure}
\hspace{1em}
\begin{subfigure}[t]{.32\textwidth}
    \centering
    \includegraphics[width=\linewidth]{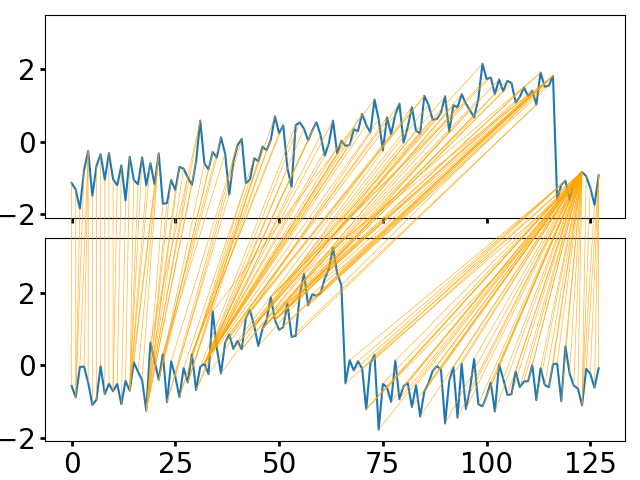}
        \caption{}
\end{subfigure}

\begin{subfigure}[t]{.32\textwidth}
    \centering
    \includegraphics[width=\linewidth]{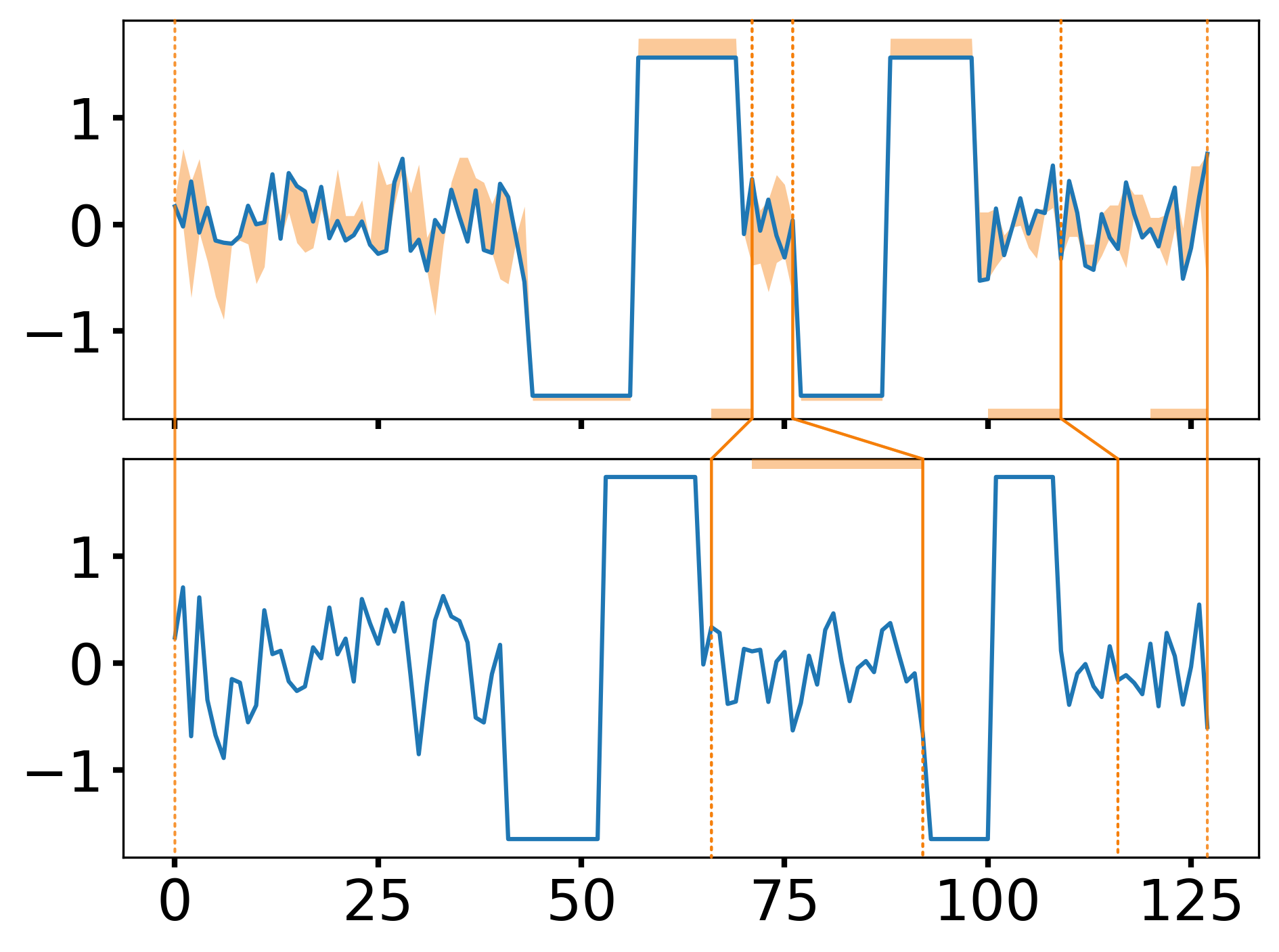}
        \caption{}
        \label{fig:sc_two_pattern}
\end{subfigure}
\hspace{1em}
\begin{subfigure}[t]{.32\textwidth}
        \centering
        \includegraphics[width=\linewidth]{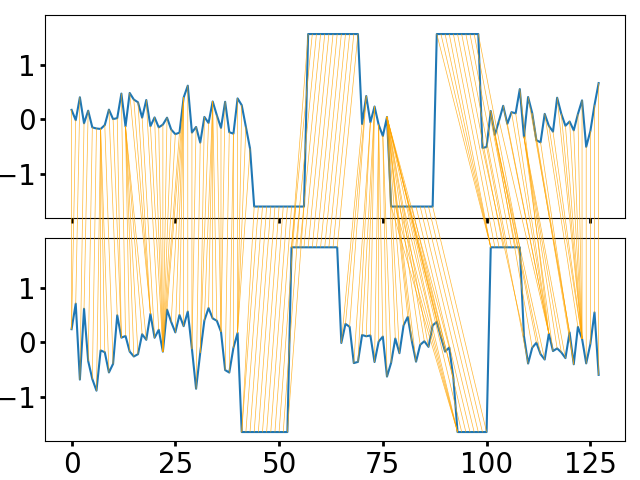}
            \caption{}
\end{subfigure}
\caption{The DSW Visualization vs. Standard Visualization on the CBF dataset (top) and the TwoPatterns dataset (bottom).}
\label{fig:sc_cbf_and_two_pattern}
\end{figure}

More examples, including a paused time series, different lengths, a pair of dissimilar time series, etc. can be found in Appendix~\ref{apx:showcases}.

% \subsection{Influence of $\ubm$}

% \subsection{Influence of $\uba$}

% \subsection{Influence of $\uba$ and $\ubm$}
%%%%%%%%%%%%%%%%%%%%%%%%%%%%%%%%%%%%%%%%%%%%%%%%%%%%%%%%%%
\section{Related Work}
\label{sec:relwork}

\paragraph{Visualizing the warping between time series.}
Visualizations that show how elastic distances capture (dis)similarities between time series are mainly found in application areas. These methods visualize how individual values match to each other. They do not directly identify subsequences and allow an easy interpretation about how they match to each other. As a consequence, these methods only consider shifts since quantifying compression of multiple points requires identifying subsequences.
% LibRosa example:
% https://librosa.org/doc/main/auto_examples/plot_music_sync.html
% When synchronizing music with DTW, the LibRosa framework shows the warping paths at a fixed interval instead of all of them \cite{mcfee2015librosa}. This interval is provided by the user.
%
For instance, the LibRosa framework uses DTW for music synchronization \cite{mcfee2015librosa}. To visualize the warping between two pieces of music, they use a visualization similar to Figure~\ref{fig:intro}.a that connects the values that match in the warping path. Different is that they only show every $n$-th matching in the warping path to simplify the visualization as the many lines can be overwhelming ($n$ is a user-defined parameter). But this hides changes in behavior that occur in between matches that are selected to be shown.
% Motion data example:
% http://disa.fi.muni.cz/research-directions/motion-data/mocapviz/
Another visualization is available in the MocapViz toolbox that is designed to analyze motion sequences \cite{Budikova2022ICSSA}. To visualize the differences between two aligned motion sequences (e.g., pose to pose), they make use of a visualization that connects the indices that are matched in the warping path. By connecting the indices instead of the values, the drawn lines cannot cross each other and the result is more clean. Additionally, they use a misalignment function to color the lines that connect matching indices to indicate speedups and slowdowns (see Figure~\ref{fig:intro}.b). The coloring, however, only shows very local effects as it uses the time difference between two consecutive points. The interpretation of how subsequences match to each other is left to visual inspection and is difficult to do.

\paragraph{Constraining the warping path.}
Another line of research in the field of elastic distances concerns obtaining more diagonal warping paths. The goal of these methods is rather to improve the quality of the distance than to illustrate and explain the warping path. This is typically achieved by changing the DTW algorithm itself. Already in the original papers introducing DTW, it was proposed that the warping path could be limited to only take certain steps in the cost matrix and in effect prefer steps that follow the direction of the diagonal (e.g., by not allowing repeated horizontal or vertical steps) \cite{Sakoe1978TASS}. Alternatively, this preference to obtain warping paths that follow the direction of the diagonal can be achieved by adding a penalty to the cost when the path deviates from the diagonal step (also referred to as amerced DTW or ADTW) \cite{Clifford2009AC,Herrmann2023DMKD}. However, both strategies express only a preference for diagonal paths, thus paths with $\kappa=0$ where there is no compression. Also, there is no guarantee that (parts of) the resulting path are actually linear. An advantage of these techniques is that they reduce the overfitting on noise like in figure~\ref{fig:ecg}. But with the disadvantage that a large compression is not allowed anymore. Also when it might be useful (e.g., comparing two monitored processes of which one was temporarily paused, see Appendix~\ref{apx:showcases} for an example).
While our method is demonstrated using the optimal path returned by DTW, it works for any warping path and is thus compatible with any framework that determines a warping path, including the DTW variants that constrain the path.

% This can also achieved by changing the step function to not allow repeated horizontal or vertical steps \cite{Sakoe1978TASS}. Related to adding a penalty is to enforce a window or band on the warping bath \cite{Sakoe1978TASS}. In this case there is also no preference to have a (piecewise) linear path, only to not deviate too much from the diagonal. \simiao{remove this because it is not that relavent.}

\paragraph{Polyline simplifications.}
While the approach presented in this work is based on methods to simplify a polyline (i.e., the RDP algorithm), such methods have a very different goal. Their goal is to keep as close as possible to the original shape \cite{Douglas1973Carto,Ramer1972CGIP}. The simplification criterion in that case requires that the deviation, defined as the spatial distance between the farthest point on the original curve and the line, remains below a specified threshold. However, in our context, this definition of the deviation is no longer meaningful (see Section~\ref{sec:examples}). Simplifications of the warping path should only be allowed if the two series behave similarly as defined by the accumulated cost over the path. As a consequence, the method presented in this work will both allow simplifications that RDP will not allow, and will not allow simplifications that RDP will allow.

% Our method provides a more comprehensive visualization of the warping in DTW, addressing the gaps left by previous approaches. Our visualization also works on the outcomes of the DTW variants, providing insight into their warping process. Our method is applicable to time series of varying lengths and supports multivariate time series.

%%%%%%%%%%%%%%%%%%%%%%%%%%%%%%%%%%%%%%%%%%%%%%%%%%%%%%%%%%
\section{Conclusion}
We presented the Dynamic Subsequence Warping algorithm and associated visualization that allows an interpretable and quantifiable comparison between two time series.
%Our approach does not necessarily reduce excessive warping; rather, it focuses on identifying an alternative path that remains close to the optimal warping path in terms of accumulated cost. This alternative path could have also been optimal if the signal noise had been slightly different. Our approach relies solely on an initial optimal warping path and a mechanism to compute the accumulated cost along a given path. 
Our method works by identifying key alignments (of subsequences) within the optimal warping path, producing a piecewise linear path. It reduces excessive or unnecessary warping while maintaining a small total distance. Since each segment can be quantified, users can easily assess how much one time series (subsequence) leads, lags, expands, compresses, or differs in amplitude compared to the other.

\bibliographystyle{splncs04}
\bibliography{references}

\begin{thebibliography}{10}
\providecommand{\url}[1]{\texttt{#1}}
\providecommand{\urlprefix}{URL }
\providecommand{\doi}[1]{https://doi.org/#1}

\bibitem{Bagnall2017DMKD}
Bagnall, A., Lines, J., Bostrom, A., Large, J., Keogh, E.: The great time series classification bake off: a review and experimental evaluation of recent algorithmic advances. Data Mining and Knowledge Discovery  \textbf{31}(3),  606--660 (2017)

\bibitem{Bresenham1987CGA}
Bresenham, J.: Ambiguities in incremental line rastering. IEEE Computer Graphics and Applications  \textbf{7}(5),  31--43 (1987)

\bibitem{Budikova2022ICSSA}
Budikova, P., Klepac, D., Rusnak, D., Slovak, M.: Visual exploration of human motion data. In: International Conference on Similarity Search and Applications. pp. 64--71. Springer (2022)

\bibitem{Clifford2009AC}
Clifford, D., Stone, G., Montoliu, I., Rezzi, S., Martin, F.P., Guy, P., Bruce, S., Kochhar, S.: Alignment using variable penalty dynamic time warping. Analytical Chemistry  \textbf{81}(3),  1000--1007 (2009)

\bibitem{keogh14ucr}
Dau, H.A., Keogh, E., Kamgar, K., Yeh, C.C.M., Zhu, Y., Gharghabi, S., Ratanamahatana, C.A., Yanping, Hu, B., Begum, N., Bagnall, A., Mueen, A., Batista, G., Hexagon-ML: The {UCR} time series classification archive  (October 2018), \url{https://www.cs.ucr.edu/\~eamonn/time\_series\_data\_2018/}

\bibitem{Douglas1973Carto}
Douglas, D.H., Peucker, T.K.: Algorithms for the reduction of the number of points required to represent a digitized line or its caricature. Cartographica: the int. journal for geographic information and geovisualization  \textbf{10}(2),  112--122 (1973)

\bibitem{Herrmann2023DMKD}
Herrmann, M., Tan, C.W., Webb, G.I.: Parameterizing the cost function of dynamic time warping with application to time series classification. Data Mining and Knowledge Discovery  \textbf{37}(5),  2024--2045 (2023)

\bibitem{mcfee2015librosa}
McFee, B., Raffel, C., Liang, D., Ellis, D.P., McVicar, M., Battenberg, E., Nieto, O.: librosa: Audio and music signal analysis in python. In: Proc. of the 14th Python in science conf. (SciPy). pp. 18--24 (2015)

\bibitem{meert_2020_7158824}
Meert, W., Hendrickx, K., Van~Craenendonck, T., Robberechts, P., Blockeel, H., Davis, J.: Dtaidistance (Aug 2020), \url{https://doi.org/10.5281/zenodo.7158824}

\bibitem{Ramer1972CGIP}
Ramer, U.: An iterative procedure for the polygonal approximation of plane curves. Computer graphics and image processing  \textbf{1}(3),  244--256 (1972)

\bibitem{Sakoe1978TASS}
Sakoe, H., Chiba, S.: Dynamic programming algorithm optimization for spoken word recognition. IEEE Transactions on Acoustics, Speech and Signal Processing  \textbf{26}(1),  43--49 (1978)

\bibitem{Stefan2012TKDE}
Stefan, A., Athitsos, V., Das, G.: The move-split-merge metric for time series. IEEE transactions on Knowledge and Data Engineering  \textbf{25}(6),  1425--1438 (2012)

\bibitem{Urribarri2020CACIC}
Urribarri, D.K., Larrea, M.L., Castro, S.M., Puppo, E.: Overview+ detail visual comparison of karate motion captures. In: 25th Argentine Congress of Computer Science, CACIC 2019. pp. 139--154. Springer (2020)

\end{thebibliography}

%%%%%%%%%%%%%%%%%%%%%%%%%%%%%%%%%%%%%%%%%%%%%%%%%%%%%%%%%%
\newpage
\appendix

\section{Bresenham's Line Rasterization Algorithm}
\label{apx:bresenham}

For a uniform subsequence mapping, a warping path is needed that represents the linear interpolation between two points in the cost matrix. We use Bresenham's line rasterization algorithm for this. The paths produced with this algorithm satisfy the three conditions for a warping path.

% \todo{Check: Change the notation, more aligned with earlier text.  No simplification is made.}
% \begin{algorithm}[H]
% % Can be in appendix. mostly Bresenham's line algorithm
% \caption{Bresenham's line rasterization algorithm to compute the accumulated cost over a linear path between two points.}
% \label{alg:bresenham}
% \begin{algorithmic}[1]
% \Procedure{LinearPathCost}{$\Po, \Cm, b, e$}
% \ARGUMENTS
% % \State{$q_0$, $q_1$: End points of line}
% \State{$\Po$: Optimal warping path}
% \State{$\Cm$: Cost matrix}
% \State{$b, e$: Starting and ending index of the path}
% \ENDARGUMENTS
% \RETURNS
% \State{Accumulated cost}
% \ENDRETURNS
% \State {$(i, i^\prime) \gets \Po(b)$}
% \State {$(j, j^\prime) \gets \Po(e)$}
% \State $d_f,d_t \gets j-i, j^\prime - i^\prime$\Comment{Differences}
% \State $e \gets d_f + d_t$\Comment{Error}
% \State $(u, v) \gets (i, i^\prime)$\Comment{Start point}
% \State $c \gets 0$\Comment{Accumulated cost for linearized path}
% % \State $l_a \gets 0$\Comment{Length of linearized path}
% \While{$(u, v) \neq (j, j^\prime)$}
% \If{$(u, v)  \neq (i, j)$}
% \State $c \gets c + \Cm(u, v)$
% % \State $l_a \gets l_a + 1$
% \EndIf
% \If{$2e \geq d_t$}
% \State $e \gets e + d_t$
% \State $u \gets u  +1$
% \EndIf
% \If{$2e \leq d_f$}
% \State $e \gets e + d_f$
% \State $v \gets v + 1$
% \EndIf
% \EndWhile
% \State $c \gets c + \Cm(j, j^\prime)$
% \State \textbf{return} $c$
% \EndProcedure
% \end{algorithmic}
% \end{algorithm}

% \todo{History}

\begin{algorithm}[H]
% Can be in appendix. mostly Bresenham's line algorithm
\caption{Bresenham's line rasterization algorithm to compute the accumulated cost over a linear path between two points.}
\label{alg:bresenham}
\begin{algorithmic}[1]
\Procedure{LinearPathCost}{$\Po, \Cm, b, e$}
\ARGUMENTS
% \State{$q_0$, $q_1$: End points of line}
\State{$\Po$: Optimal warping path}
\State{$\Cm$: Cost matrix}
\State{$b, e$: Starting and ending index of the path}
\ENDARGUMENTS
\RETURNS
\State{Accumulated cost}
\ENDRETURNS
\State {$q_0 \gets \Po(b)$}
\State {$q_1 \gets \Po(e)$}
\State $d_f,d_t \gets q_1(0)-q_0(0),q_0(1)-q_1(1)$\Comment{Differences}
\State $e \gets d_f + d_t$\Comment{Error}
\State $q_i \gets q_0$\Comment{Start point}
\State $c_a \gets 0$\Comment{Accumulated cost for linearized path}
% \State $l_a \gets 0$\Comment{Length of linearized path}
\While{$q_i \neq q_1$}
\If{$q_i \neq q_0$}
\State $c_a \gets c_a + \Cm(q_i)$
% \State $l_a \gets l_a + 1$
\EndIf
\If{$2e \geq d_t$}
\State $e \gets e + d_t$
\State $q_i \gets q_i + (1,0)$
\EndIf
\If{$2e \leq d_f$}
\State $e \gets e + d_f$
\State $q_i \gets q_i + (0,1)$
\EndIf
\EndWhile
\State $c_a \gets c_a + \Cm(q_1)$
\State \textbf{return} $c_a$
\EndProcedure
\end{algorithmic}
\end{algorithm}

\newpage
\section{Algorithm for Merging Uniform Subsequence Mappings (Phase 2)}
\label{apx:merge}

In section~\ref{sec:phase2}, we have provided all relevant steps to merge segments in phase 2 of the Dynamic Subsequence Warping algorithm. Since that is a high level description, we provide all details in Algorithm~\ref{alg:merge}.

\begin{algorithm}[H]
\caption{Phase 2: Merging segments.}
\label{alg:merge}
\begin{algorithmic}[1]
\Procedure{MergeSegmentedPath}{$\Po,\Cm, \mathbf{R}, \uba, \ubm$}
\ARGUMENTS{}
\State{$\Po$: Optimal warping path}
\State{$\Cm$: Cost matrix}
%\State{\textsc{ComputeUpperBound}: A method to compute the bound}
% \State{$\varepsilon$: The variation allowed (see \textsc{ComputeUpperBound} in Alg~\ref{alg:upperbound})}
\State {$\mathbf{R}$: Indices of $\Po$ that make up the simplified path}
\State {$\uba$, $\ubm$: Absolute and relative tolerance criterion}
\ENDARGUMENTS
% \State $\ca \gets \sum_{i\in 0:|r|-1}\Call{LinearPathCost}{\Po(i),\Po(i+1),\Cm}$
% \State $\uba, \ubm \gets \Call{computeUpperbound}{\varepsilon,\dtwo,\lo}$
\State $\mathbf{Q} \gets \textsc{PriorityQueue}(\{\})$\Comment{Prioritize shortest segments}
% \State {Let the priority queue $\mathbf{Q}$ store the adjacent segment pairs that remain to be merged. Each adjacent segment pair is represented by three values: the starting index of the first segment, the ending index of the first segment (which is also the starting index of the second segment), and the ending index of the second segment. The segment pair that contains the shortest segment has the highest priority.}
\For{$i \in (2:|\Ps|-1)$}
\State $\mathbf{Q} \gets \mathbf{Q}  + \left\{(\min\left[\Ps(i)-\Ps({i-1}), \Ps({i+1})-\Ps(i)\right], \Ps({i-1}), \Ps(i), \Ps({i+1}))\right\}$ 
% \State {Add  $(\min\left[\Ps(i)-\Ps({i-1}), \Ps({i+1})-\Ps(i)\right], \Ps({i-1}), \Ps(i), \Ps({i+1}))$ to $\mathbf{Q}$.}
\EndFor
\State $\mathbf{R} \gets \textsc{SortedList}(\mathbf{R})$
% \State {Resort $\mathbf{R}$ in ascending order.}
\While{$|\mathbf{Q} | \neq 0$}
% \State $(\ib,\im, \ie) \gets \textsc{TakeAnElement}(Q)$
\State{Take the first adjacent segment pair $(b,m, e)$ from $\mathbf{Q}$.}
% \State $q_b, q_e \gets \Po(b), \Po(e)$
\State $l \gets e - b$
\State $c^\prime \gets \Call{LinearPathCost}{\Po, \Cm, b, e}$
\State $c \gets \Call{OptimalPathCost}{\Po, \Cm, b, e}$
% \State $c \gets \sum_{q_i \in \Po(i_0+1:i_2-1)} \Cm(q_i)$\Comment{Cost of path between $q_0$ and $q_2$}
% \State $c^\prime \gets \Call{LinearPathCost}{q_b, q_e, \Cm}$
\If{$c^\prime \leq \max(c + \frac{l}{L} \cdot \uba, c \cdot (1 + \ubm))$}
\State {Remove $m$ from $\mathbf{R}.$}\Comment{Merge segments $(b, m)$ and $(m, e)$}
\If{$b > 1$}
\State $p \gets$ largest index in $\mathbf{R}$ that is less than $b$  %\mathbf{Q}.\Call{FindLargestLessThan}{b}$
% \State{Find the  element right before $b$ in $\mathbf{R}$ and assign it as $p$.}
\State{Add $(\min(b-p, e-b), p, b, e)$ to $\mathbf{Q}$.}
% \State $\mathbf{Q} \gets \mathbf{Q} + \{(\min(b-p, e-b), p, b, e)\}$
\EndIf
\If{$e < |\Ps|$}
\State $n \gets$ smallest index in $\mathbf{R}$ that is greater than $e$  %\mathbf{Q}.\Call{FindSmallestGreaterThan}{e}$
% \State{Find the smallest index in $\mathbf{Q}$ that is smaller than $b$ and assign it as $p$.}
% \State{Find the  element right after $e$ in $\mathbf{R}$ and assign it as $n$.}
\State{Add $(\min(e-b, n-b),  b, e, n)$ to $\mathbf{Q}$.}
% \State $\mathbf{Q} \gets \mathbf{Q} + \{(\min(\ie-\ib,\i_n - \ie), \ib, \ie,  i_n)\}$
\EndIf
\EndIf
\EndWhile\label{alg:merge-while}
\State $\mathbf{P}^{s} \gets \{\Po (i) \mid i \in \mathbf{R}\}$
\State $\textbf{return } \mathbf{P}^{s}$
\EndProcedure
\end{algorithmic}
\end{algorithm}

\newpage
\section{Additional examples} \label{apx:showcases}

In this appendix, we collected a few cases that are not presented in the main text but can be useful to deepen the understanding of what the visualization represents.

\paragraph{Example: Two monitored processes where one is paused.}
Extreme warping is when the warping path contains a $1$-to-$n$ or $n$-to-$1$ mapping in the warping path with $n$ large. In some cases, this unexpected behavior is due to overfitting to noise like in Figure~\ref{fig:ecg} and we want to simplify such a path. This is also the setting where techniques like constraining the warping path (see Section~\ref{sec:relwork}) can be useful to avoid extreme warping and obtain a better, more robust, path and distance. In other cases, this is the expected behavior and we want to capture this extreme warping. For example, when two processes are monitored and one is paused for a certain amount of time. In Figure~\ref {fig:apx_ecg_5_pause} we have reused the time series from Figure~\ref {fig:sc_two_pattern} and introduced a pause in the second time series causing the value to stay constant during the duration of the pause. We can observe that this behavior is captured by the segmentation as expected.
% \simiao{I like this one. Also, it is using different lengths.} \simiao{s1 is not well aligned to the axis, Wannes has updated it.}

\begin{figure}[h]
\centering
\begin{subfigure}[t]{.48\textwidth}
    \includegraphics[width=\linewidth]{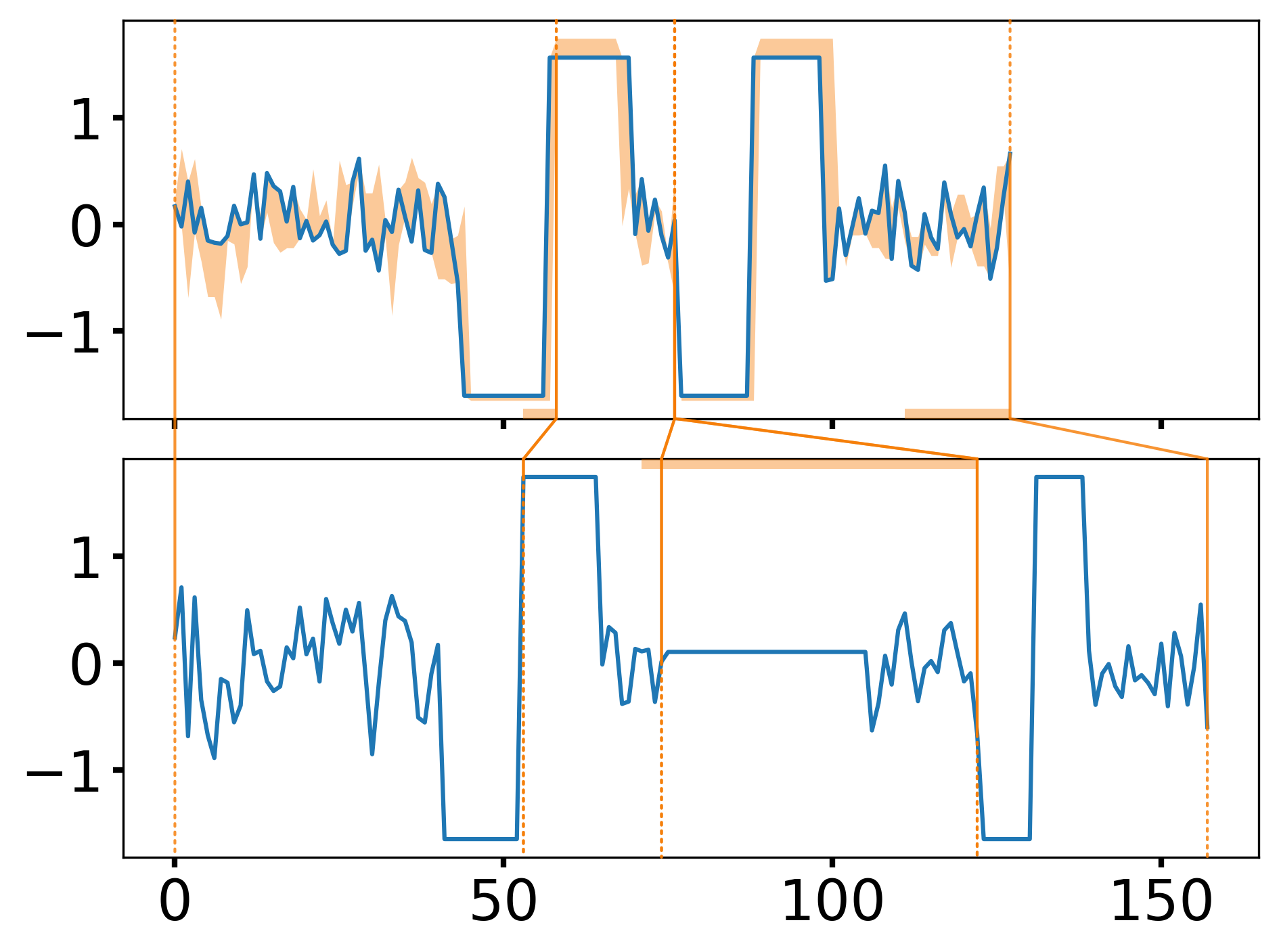}
    \caption{}
\end{subfigure}
\hfill
\begin{subfigure}[t]{.40\textwidth}
    \includegraphics[width=\linewidth]{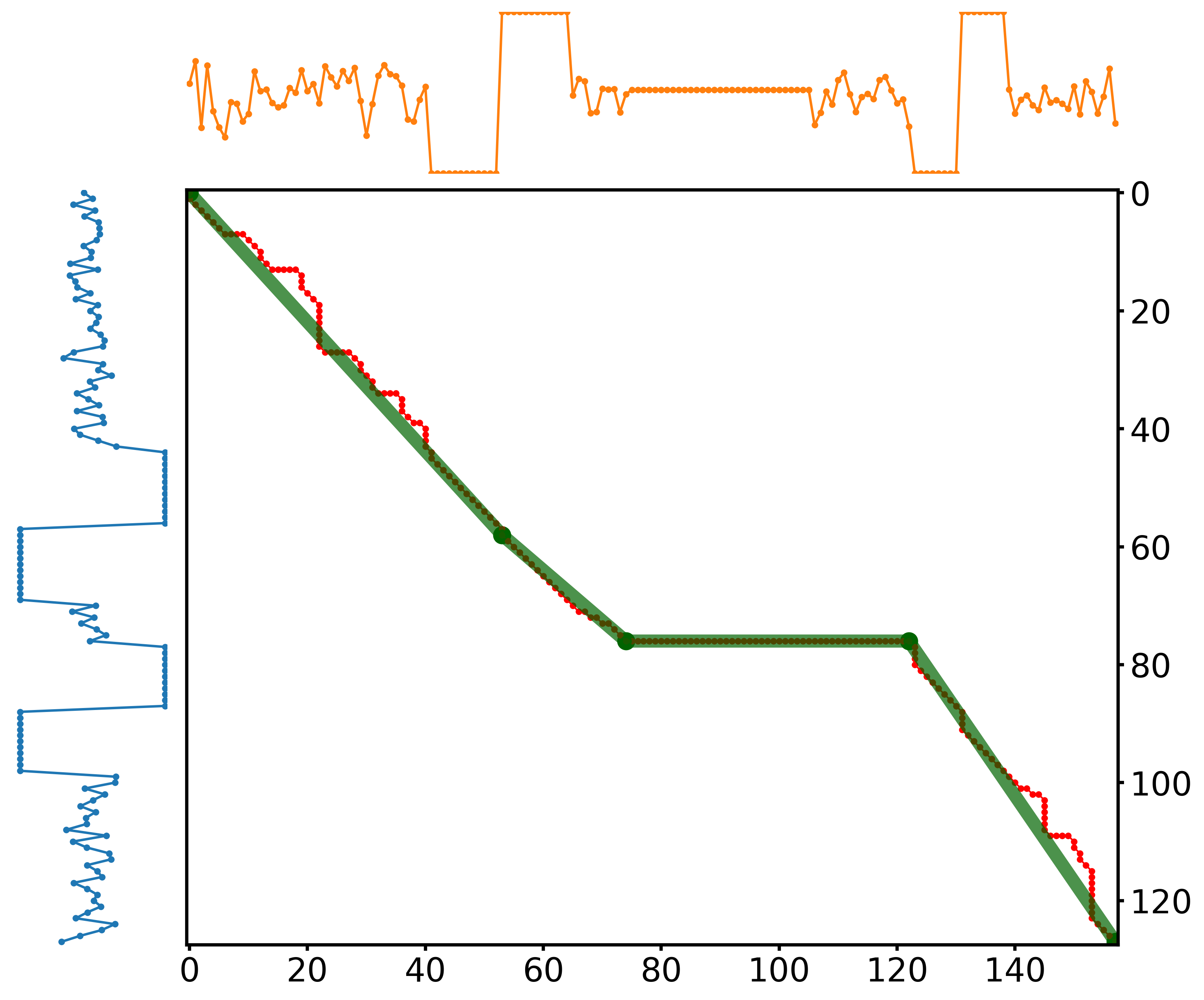}
    \caption{}
\end{subfigure}
\caption{Two time series from the TwoPatterns dataset (UCR collection) with a pause introduced in the bottom time series at index 75 (left) leading to full compression in the segment around index 75 (right)}
\label{fig:apx_ecg_5_pause}  
\end{figure}

\paragraph{Example: DSW visualizations of similar vs. dissimilar time series.}
Below, we illustrate how DSW visualizations can aid in understanding why a given pair of time series are dissimilar or have different labels. In Figure~\ref{fig:apx_arrowhead_similar}, which shows the DSW visualization for a pair of similar time series, there is only a noticeable amplitude difference between the matched subsequences in the middle. This one segment explains why the distance between these two time series is larger than other more identical time series.
In contrast, Figure~\ref{fig:apx_cbf_dissimilar}, which shows the DSW visualization for a pair of dissimilar time series (i.e., different labels), reveals a clear amplitude difference across almost all the matched subsequences. 
Even though the matching discovered similar behavior. For example, the second segment exhibits a decrease of values over the segment. But a sudden and steep decrease in the first time series and a gradual one in the second time series. The amplitude shading in that segment also shows that the second time series starts at a higher value. This also is why the segment on the first time series is still rather wide, its maximal value is somewhat in the middle of the values of the decreasing part in the second time series. All of this information provides supplementary insight into why the time series are dissimilar in this case.

The pair of similar time series (a) is from the ArrowHead dataset of the UCR collection~\cite{keogh14ucr}, while the dissimilar pair is from the CBF dataset of the same collection. In each dataset, we consider time series with the same label as similar and those with different labels as dissimilar.

% History
% The goal of the Dynamic Subsequence Warping method is to show how two time series are similar. But when two time series are dissimilar, the DTW distance is large and our method can be used to visualize why the distance is large. It will show the best possible warping and how the values of both time series are still very different.
% We use the Cylinder Bell Funnel (CBF) dataset available in the 2018 UCR benchmark \cite{keogh14ucr}. This dataset contains three distinct labels, with each time series assigned a single label.
% Figure~\ref{fig:usecase_cbf_3} presents a pair of time series with different labels, thus that are dissimilar. 

\begin{figure}
\centering
\begin{subfigure}[t]{.48\textwidth}
    \includegraphics[width=\linewidth]{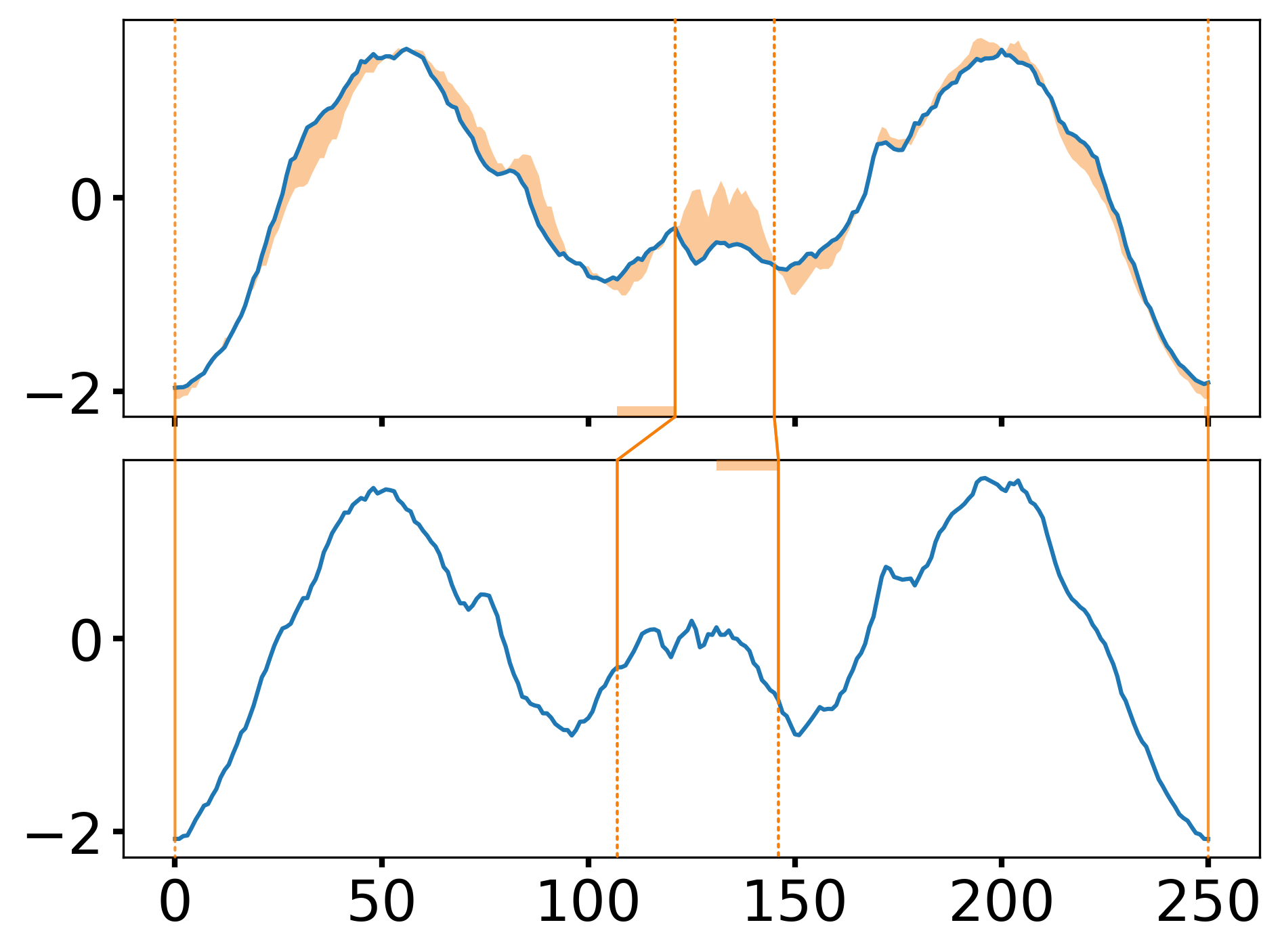}
    \caption{}
    \label{fig:apx_arrowhead_similar}
\end{subfigure}
\hfill
\begin{subfigure}[t]{.48\textwidth}
    \includegraphics[width=\linewidth]{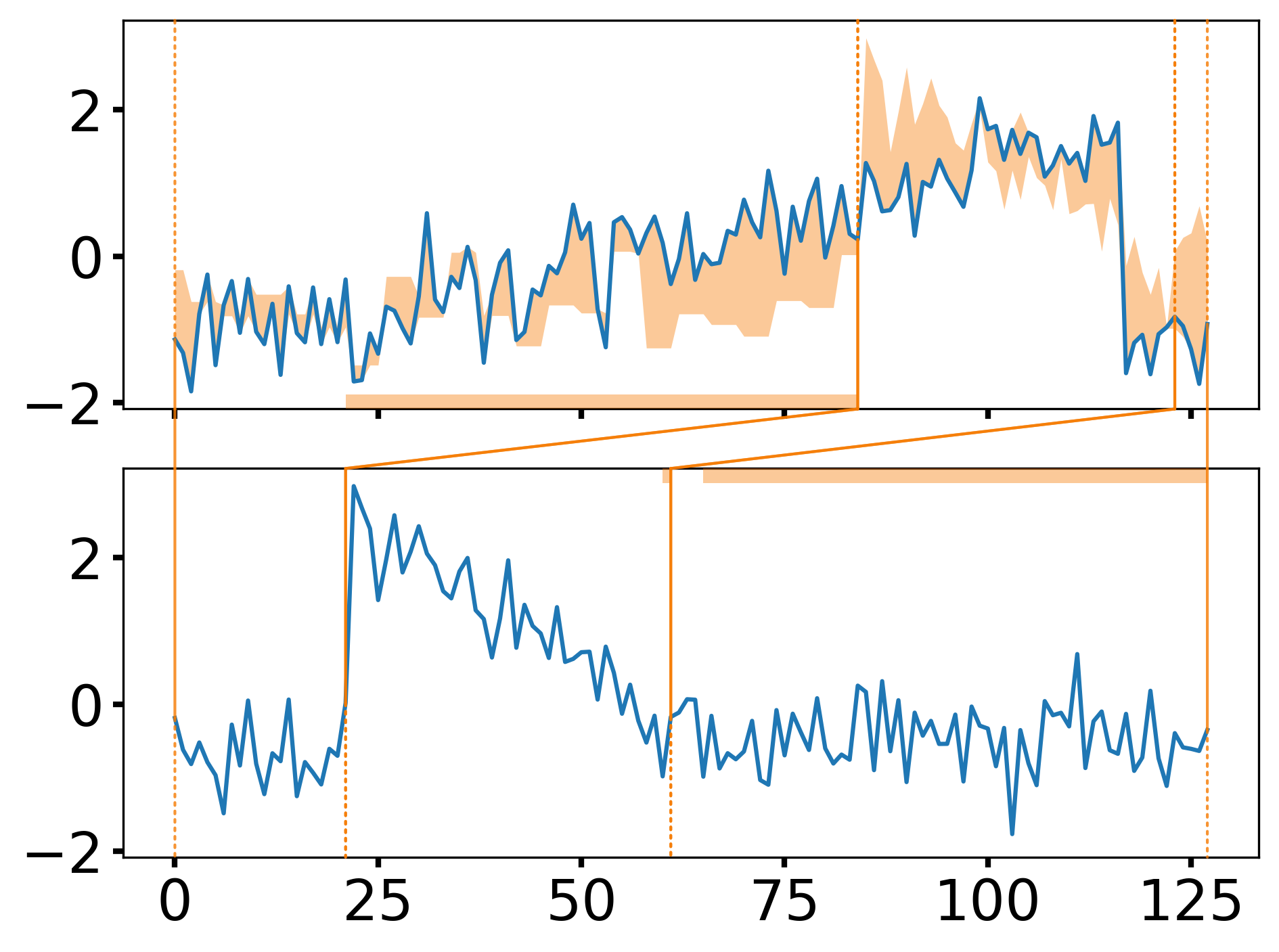}
    \caption{}
    \label{fig:apx_cbf_dissimilar}
\end{subfigure}
\caption{DSW visualization for a time series pair with the same label (a) vs. a time series pair with different labels (b).}
\end{figure}

\end{document}